\newcommand{\al}{\alpha}
\newcommand{\si}{\sigma}
\newcommand{\E}[1]{\mathbb{E}\left[#1\right]}
\newcommand{\Prob}{\mathbb{P}}
\newtheorem{lemma}{Lemma}
\newtheorem{prop}{Proposition}
\newtheorem{remark}{Remark}
\newtheorem{corr}{Corollary}
\newtheorem{ex}{Example}
\begin{document}
\title{Guaranteed Fixed-Confidence Best Arm Identification in Multi-Armed Bandits: Simple Sequential Elimination Algorithms}

\author{\IEEEauthorblockN{MohammadJavad Azizi}
\IEEEauthorblockA{
University of Southern California\\
azizim@usc.edu}
\and
\IEEEauthorblockN{Sheldon Ross}
\IEEEauthorblockA{University of Southern California\\
smross@usc.edu}
\and
\IEEEauthorblockN{Zhengyu Zhang}\IEEEauthorblockA{University of Southern California\\
zhan892@usc.edu}}

\maketitle
\thispagestyle{plain}
\pagestyle{plain}

\begin{abstract}
We consider the problem of finding, through adaptive sampling, which of n options (arms) has the largest mean.  Our objective is to determine a rule which identifies the best arm with a fixed minimum confidence using as few observations as possible, i.e. this is a fixed-confidence (FC) best arm identification (BAI) in multi-armed bandits. We study such problems under the Bayesian setting with both Bernoulli and Gaussian arms. We propose to use the classical \emph{vector at a time} (VT) rule, which samples each remaining arm once in each round. We show how VT can be implemented and analyzed in our Bayesian setting and be improved by early elimination. Our analysis show that these algorithms guarantee an optimal strategy under the prior. We also propose and analyze a variant of the classical \emph{play the winner} (PW) algorithm. Numerical results show that these rules compare favorably with state-of-art algorithms.
\end{abstract}

\IEEEpeerreviewmaketitle

\section{Introduction and Formulation}\label{sec:intro}%
Let $F_{\theta}(x)$ be a family of distributions indexed by its mean  $\theta.$
Suppose there are $n$ arms, and each new observation from arm $i$ is a random variable independent of previous observations with distribution $F_{\theta_i },$ where $\theta_1, \ldots, \theta_n$ are unknown. We want to find which arm has the largest mean. A decision is made at each stage as to which arm to sample next from (sampling rule), when to stop (stopping rule) and declare which arm has the largest mean (recommendation rule). The objective is to minimize number of samples, $N$, subject to the condition that the probability of correct choice is at least $\alpha$. This setting is mostly known as \emph{Fixed confidence best arm identification} in the reinforcement learning literature. We study such models both when the arm distributions are Bernoulli and Gaussian with a fixed variance.

Best arm identification has many applications. Foremost is probably in clinical trials to determine which of several medical approaches (e.g., drugs, treatments, procedures, and various vaccines) yields the best performance. A arm here refers to a particular approach, with its use resulting in either a success (suitably defined) or not. Online advertising is another application \cite{Li_2010LinUCB, DisplayAdvertising2017}, where a decision maker is trying to decide which of $n$ different advertisements to utilize. For instance, the advertisements might be a recruitment ad, and a success might refer to a subsequent clicking on the advertisement. Another application is to choose among different methods for teaching a particular skill. Each day, a method can be used on a group of students, with the students being tested at the end of the day with each test resulting in a score which would be pass (1) or fail (0) in the Bernoulli case, and numerical in the Gaussian case.  

This problems has been studied for quite some time. However, the early work, 
had the assumption that the difference between the largest and second largest arm mean is at least some known positive value \cite{bks, bk, h, hm, p, sw1, sw2}. More recent works such as \cite{abm, jmnb, ggl} do not make this assumption, while others like \cite{emm, gk, ru} keep this assumption. 
We take a Bayesian approach that supposes the unknown means are the respective values of independent and identically distributed (i.i.d) random variables 
having a specified \emph{prior} distribution $F$.  In addition, we show that even though we assume a Bayesian setting, our rules can be applied in the Bernoulli case without this assumption, by using a Bayesian rule when the prior is uniform $(0, 1)$, as in the well known \emph{Thompson sampling} rule. The main motivation for our work is the observation that many recent algorithms for FC BAI are over conservative in their stopping rule \cite{degenne2020gamification, wang2021core, bastani2020explorationfree}; this is to say their accuracy is much bigger than desired confidence $\al$ which causes $\E{N}$ be very large. This is because they are designed based on a worst-case scenario \cite{wang2021core}. Given the prior, we know that these worst cases happen with a very small probability so these algorithms are extra conservative. \cref{Fig:PcNormal} and \cref{Fig:EnNormal} show a sample of these algorithms where their accuracy is almost 1 however they have large sampling complexity. This is while our algorithms satisfy the desired confidence level with much smaller $N$. We use Markov chain properties, specially the Gambler's ruin problem \cite{ro} to do the analysis. To the best of our knowledge this is the first time BAI is treated directly using these techniques.

In \cref{sec:VT-Bern}, we consider the Bernoulli case. 
We reconsider the classical "vector at a time" (VT)  rule, with critical value $k\in \mathbb{Z^+}$ as defined in \cref{alg:VT}.
The appropriate value of $k$ that results in confidence of at least $\al$ was determined in \cite{sw1} and  \cite{sw2} under the assumption that the optimality gap of the second best arm, call it $\Delta_{\min}$, is strictly positive. We show how this rule can be implemented and analyzed in our Bayesian setting. We develop a modification of the VT rule by allowing for early elimination in \cref{sec:VT-EE}. 
We show how to determine the probability that the best arm is eliminated early, as well as the mean number of the non-best arms that are eliminated early.

In \cref{sec:PW} we consider a variant of another classical rule: "play the winner" (PW).
This variant differs from the classical model in \cite{sw1} which for instance eliminated a arm if at some point - not necessarily at the end of a round -  it has $k$ fewer successes than another arm for Bernoulli distribution. We show how to analyze this rule in the Bayesian setting. 

We consider the case of Normal arms in \cref{sec:normal}, 
where arm distributions are all Normal with fixed variance $\si^2$ and standard Normal for the prior distribution of the means. Finally in \cref{sec:exps} we present the numerical experiments that confirm our approximations in the analysis and indicate that our algorithm outperform recently proposed algorithms.

\section{VT Rule, Bernoulli Case}\label{sec:VT-Bern}

Suppose there are $n$ Bernoulli arms with respective means $p_1, \ldots, p_n$. As noted earlier, we suppose that $p_1, \ldots, p_n$ are the respective values of iid random variables 
having a specified distribution $F$.
Let $s_i(t)$ show the cumulative number of success for arm $i$ at time $t$. We might use $s_i$ as well when the $t$ is given in the context.
The vector at a time (VT) rule, introduced in \cite{bks}, 
for a given critical value $k\in \mathbb{Z}^+$, is given in \cref{alg:VT}.
\begin{algorithm}[tb]
  \caption{Vector at a time (VT)}
  \label{alg:VT}
\begin{algorithmic}
  \STATE \textbf{Input}: all the arms.
  \REPEAT
  \STATE - Sample each remaining arm once.
  \STATE - For each arm $i$ if $\exists j\neq i\  \text{s.t.}\ s_i(t)\leq s_j(t)-k$ eliminate arm $i$.
  \UNTIL{there is one arm remaining}
\end{algorithmic}
\end{algorithm}

    Note that VT algorithm is similar to Action Elimination in \cite{Jamieson-2014} while it
    is slightly different in that it is non-adaptive and more suitable for our setting. Also in this paper we specifically analyzed VT for Bernoulli and Gaussian distribution and did not use a general estimator of the mean, despite the Action Elimination algorithm.  
	Let  $p_{[1]} > p_{[2]} > \ldots > p_{[n]}$ be the ordered values of the unknown means $p_1, \ldots, p_n,$ and let $C$ be the event that the correct choice is made. 
	We now show how to determine $k$ such that $\Prob(C) \geq  \al$ when $p_1, \ldots, p_n$ are the values of independent and identically distributed random variables 
	$P_1, \ldots, P_n$ 
	having distribution $F$. We let $I\{A\}$ be the indicator of the event $A$, and $X =_{st} Y$ indicates that $X$ and $Y$ have the same distribution. Let $U, U_1, \ldots, U_{n}$ be independent uniform $(0, 1)$ random variables then. 
	\begin{lemma}\label{lemma:max_Xi}
	$\quad \max_i P_i =_{st} F^{-1}(U^{1/n})$\end{lemma}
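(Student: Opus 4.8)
The plan is to use the inverse-transform (probability integral) representation of the $P_i$, which reduces the problem to the maximum of independent uniforms. Since $U, U_1, \ldots, U_n$ are independent uniform $(0,1)$ random variables and $F^{-1}$ is the quantile function of $F$, inverse-transform sampling gives $F^{-1}(U_i) =_{st} P_i$, and by independence the full joint laws match, so $(F^{-1}(U_1), \ldots, F^{-1}(U_n)) =_{st} (P_1, \ldots, P_n)$. Consequently $\max_i P_i =_{st} \max_i F^{-1}(U_i)$.

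Next I would exploit monotonicity. Because $F^{-1}$ is nondecreasing, the maximum commutes with it: $\max_i F^{-1}(U_i) = F^{-1}(\max_i U_i)$. It therefore suffices to identify the law of $M := \max_i U_i$ and verify that $M =_{st} U^{1/n}$; applying $F^{-1}$ to both sides then finishes the argument.

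For this last point I would compute the two distribution functions directly. For $x \in [0,1]$, independence gives $\Prob(M \leq x) = \prod_{i=1}^{n} \Prob(U_i \leq x) = x^n$, while $\Prob(U^{1/n} \leq x) = \Prob(U \leq x^n) = x^n$. Since the distribution functions agree on $[0,1]$ (and trivially outside it), $M =_{st} U^{1/n}$, and combining with the previous paragraph yields $\max_i P_i =_{st} F^{-1}(M) =_{st} F^{-1}(U^{1/n})$, as claimed.

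The only delicate point, and the step I would watch most carefully, is the choice of generalized inverse. When $F$ has flat stretches or jumps, $F^{-1}$ must be read as the quantile function $F^{-1}(u) = \inf\{x : F(x) \geq u\}$, and one should confirm that both inverse-transform sampling and the commutation $\max_i F^{-1}(U_i) = F^{-1}(\max_i U_i)$ persist in this generality; they do, because the quantile function is nondecreasing and left-continuous. If $F$ is taken continuous and strictly increasing, as is effectively the case for the priors used here, this subtlety vanishes and the three displayed steps are the entire proof.
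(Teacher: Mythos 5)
Your proof is correct and follows essentially the same three-step route as the paper: inverse-transform representation $P_i =_{st} F^{-1}(U_i)$, commuting the maximum with the nondecreasing map $F^{-1}$, and identifying $\max_i U_i =_{st} U^{1/n}$ via the distribution function $x^n$. The added remark on the generalized (quantile) inverse is a welcome clarification but does not change the argument.
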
 
	\begin{proof}
	\begin{align*}
	    \max_i P_i &=_{st} \max (F^{-1}(U_1), \ldots, F^{-1}(U_n) )\\ 
		&= F^{-1}(\max_{i=1, \ldots, n} U_i ) \\
		&=_{st} F^{-1} (U^{1/n} )
	\end{align*}
	\end{proof}
	Now suppose that in each round we take an observation from each arm, even those that are eliminated. Let $0$ be the best arm, namely the one with the largest mean, and randomly number the others as arm $1, \ldots, n-1.$ Imagine that the best arm is playing a "Gambler's Ruin Game" with each of the others, with the best one beating arm $i$ if the difference $s_0(t)-s_i(t)$ hits the value $k$ before $-k$. Let $B_i$ be the event that the best arm beats arm $i$ and note that VT algorithm recommends the best arm correctly if it wins all of its games. That is, if we let $B \equiv B_1B_2 \cdots B_{n-1}$
	then $B \subset C,$ thus $\Prob(C) \geq \Prob(B)$. We can bound $\Prob(B)$ as follows.
	\begin{lemma}
	$\Prob(B) \geq  (\Prob(B_1))^{n-1}$
	\end{lemma}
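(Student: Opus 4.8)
The plan is to exploit the fact that the $n-1$ games $B_1, \ldots, B_{n-1}$ are all driven by the \emph{same} best-arm observation sequence, so they are positively dependent, and to make this precise by conditioning. First I would record the symmetry: since the non-best means are iid draws from $F$ and the per-round outcomes are conditionally iid, the arms $1, \ldots, n-1$ are exchangeable, so $\Prob(B_i) = \Prob(B_1)$ for every $i$. Hence the right-hand side $(\Prob(B_1))^{n-1}$ is exactly $\prod_{i=1}^{n-1}\Prob(B_i)$, the value one would obtain if the games were independent; the content of the lemma is that the actual (dependent) games do at least this well.

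Next I would condition on the entire outcome sequence of the best arm, which determines the path $s_0(\cdot)$. Given this path, the event $B_i$ — that the walk $s_0(t)-s_i(t)$ reaches $+k$ before $-k$ — is a function of the outcomes of arm $i$ alone, and these collections of outcomes are independent across $i$. Therefore, conditionally on $s_0(\cdot)$, the events $B_1, \ldots, B_{n-1}$ are independent, and by exchangeability each has the same conditional probability; call it $g := \Prob(B_1 \mid s_0(\cdot))$, a random variable taking values in $[0,1]$. Consequently
\begin{align*}
\Prob(B) &= \E{\Prob(B_1 \cdots B_{n-1} \mid s_0(\cdot))} \\
&= \E{g^{\,n-1}},
\end{align*}
while $\Prob(B_1) = \E{g}$. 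The lemma then follows from Jensen's inequality: since $t \mapsto t^{\,n-1}$ is convex on $[0,1]$ for $n \geq 2$, we get $\E{g^{\,n-1}} \geq (\E{g})^{\,n-1}$, i.e. $\Prob(B) \geq (\Prob(B_1))^{n-1}$.

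I expect the main obstacle to be justifying the conditional-independence step cleanly rather than the (routine) Jensen estimate. Two points deserve care: that each Gambler's Ruin game terminates almost surely, so that $B_i$ is a well-defined event (the walk $s_0(t)-s_i(t)$ is a nontrivial random walk on the integers between the absorbing barriers $\pm k$, hence absorbed with probability one); and that, once the full best-arm path is fixed, the games genuinely depend on disjoint pools of randomness — the outcomes of arm $i$ — so that factorization of the conditional probability is valid. With those two facts in hand, the exchangeability argument forces all the conditional factors to equal the common $g$, and the convexity step closes the proof.
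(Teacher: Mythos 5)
Your overall strategy is sound and genuinely different from the paper's (which conditions only on the best arm's mean and then invokes \emph{association} of increasing functions of independent random variables to get $\Prob(B\mid U_0)\geq\prod_i\Prob(B_i\mid U_0)$, before applying Jensen exactly as you do). But as written there is a real gap in your conditional-independence step, and it traces back to a false premise: the non-best means are \emph{not} iid draws from $F$. After relabelling so that arm $0$ is the best, the means $P_1,\ldots,P_{n-1}$ are the non-maximal values of $n$ iid draws (in random order); they are exchangeable but dependent --- they only become conditionally iid once you also condition on the maximum $P_0$ (conditional on the max, the others are iid from $F$ truncated below it, which is exactly the $P_i=F^{-1}((1-U_i)U_0^{1/n})$ representation the paper sets up). Consequently, fixing the best arm's outcome path $s_0(\cdot)$ does give each $B_i$ a disjoint pool of \emph{observation} noise, but the events still share the unresolved randomness in $P_0$ through the joint law of $(P_1,\ldots,P_{n-1})$, so the factorization $\Prob(B\mid s_0(\cdot))=g^{n-1}$ does not follow from what you have said. (It can be rescued: either condition on the pair $(P_0,\,s_0(\cdot))$, under which the tuples $(P_i,\ \text{outcomes of arm }i)$ really are iid and independent of the best arm's path, or note that the \emph{infinite} path determines $P_0$ almost surely via the strong law of large numbers --- but one of these observations must be made explicit.)

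With that repair your argument is correct and, in my view, cleaner than the paper's: by conditioning on both the best arm's mean and its full outcome sequence you obtain exact conditional independence and need only Jensen's inequality, whereas the paper conditions on less (only $U_0$), must then cope with the shared best-arm observation noise, and does so by checking that each $I\{B_i\}$ is a monotone function of independent uniforms so that the indicators are associated. The trade-off is that the paper's coupling construction is doing double duty --- it simultaneously decouples the non-best means and organizes the monotonicity needed for association --- while your route isolates the two sources of dependence and kills both by conditioning. Your remarks on almost-sure termination of each game and on exchangeability giving $\Prob(B_i)=\Prob(B_1)$ are fine.
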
%
	\begin{proof}
	Let $U_0, U_1, \ldots, U_{n-1}$ and $U_{i,j}, i = 0, 1, \ldots, n-1, j \geq 1$ all be independent uniform
	$(0, 1)$ random variables.
	We set $P_0 = F^{-1}(U_0^{1/n}),\; P_i = F^{-1}((1-U_i) U_0^{1/n}), i>0.$ We know that conditional on the maximum of $n$ independent uniform $(0, 1)$ random variables, call it $U_{\max}$, the other $n-1$ of them 
	are distributed as independent uniform $(0, U_{\max})$. Now using \cref{lemma:max_Xi}, it follows that the joint distribution of $P_0, P_1, \ldots, P_{n-1}$ is exactly that of the mean of the best arm, followed by the means of the other $n-1$ arms in a random order.
	
	Let $I_{0,j} = I\{1-U_{0,j} < P_0\}, j \geq 1,$ and $I_{i,j} = I\{U_{i,j} < P_i\}, i < n, j \geq 1.$ Note that
	$I_{i,j}$ has the distribution of the $j^{th}$ observation of arm $i$ and also that $I_{0,j}$ is increasing in $U_{0,j}$ whereas $I_{i,j}$ is decreasing in $U_{i,j}, i \geq 1.$ Because $P_i$ is decreasing in $U_i$ for $i>0$, it consequently follows that, conditional on $U_0,$ the indicator variables $I\{B_1 \}, \ldots,  I\{B_{n-1}\}$ are all increasing functions of the independent random variables $U_1, \ldots, U_{n-1}, U_{i,j} , i < n, j \geq 1.$ Consequently, given $U_0,$ the indicators $I\{B_1 \}, \ldots I\{B_{n-1}\}$ are associated, implying that 
	\begin{align*}
	    \Prob(B |U_0) &\geq \prod_{i=1}^{n-1} \Prob(B_i|U_0)\\
	    \Rightarrow \Prob(B|U_0) &\geq (\Prob(B_1|U_0))^{n-1}
	\end{align*}
	by symmetry. Taking expectations gives
	\begin{align*}
		\Prob(B) \geq & \E{ (\Prob(B_1|P_0))^{n-1}} \\
		\geq & (\E{ \Prob(B_1|P_0)})^{n-1} \\
		=& (\Prob(B_1))^{n-1}
	\end{align*}
	where the last inequality follows from Jensen's inequality. 
	\end{proof}
	To obtain an upper bound on $\Prob(B),$ let $B^*$ be the event that the best arm beats the best of arms in $\{1,\dots,n-1\}.$ That is, that arm $0$ beats the one having mean $\max_{1\leq i} P_i.$ Because $B \subset B^*$ we know
	$\Prob(B) \leq \Prob(B^*).$ Before getting into more detail on how to compute this upper bound we should note the following.
	\begin{remark}\label{rem:Pc=PB}
	It is possible for the best arm to be chosen even if it does not win all its games. Indeed, this will happen if the best arm loses to an arm that at an earlier time was eliminated, However, it is intuitive that this event has a very small probability of occurrence. Consequently, $\Prob(C) \approx \Prob(B).$
	\end{remark}
	For computing $\Prob(B_1)$ and $\Prob(B^*)$ we can use simulation with a conditional expectation estimator. Then we use this estimation to choose a proper $k$ for a given $\al$. First some preliminaries on the Gambler's Ruin problem. If arms with known mean $x$ and $y$ play a game that ends when one has $k$ more wins than the other, then the probability that the first one wins is the probability that a gambler, starting with fortune $k$ reaches a fortune of $2k$ before $0$. This gambler wins each game with probability 
	\[p = \frac{x(1-y)}{x(1-y) + y(1-x)}\]
	and if we set $r \equiv \frac{1-p}{p} = \frac{ y(1-x)}{x(1-y) } $ we have
	$$\Prob(\mbox{The arm with mean }x\;\mbox{wins}) = \frac{ 1- r^k }{ 1- r^{2k} } = \frac{ 1} { 1 + r^k } $$
	Also, using known results from the gambler's ruin problem along with Wald's equation (to account for the fact that not every round leads to a gain or a loss) it follows that the mean number of plays before stopping is
	\begin{equation}   \label{eq1}
	\E{\mbox{number of plays}} = \frac{ k( 1-r^k )}{ (r^k + 1)(x-y)}
	\end{equation}
	We can approximate $\E{N}$ by mean number of plays in the corresponding games using \cref{eq1}. Now we can prove the following:
	\begin{prop}\label{prop:vt}
	Let 
	$U$ and $V$ be independent uniform $(0, 1)$ random variables, and let $X = F^{-1}( U^{1/n} )$ and
	\begin{align*}
	    Y&=F^{-1}( U^{1/n} V),\quad & W&=F^{-1}( U^{1/n} V^{1/(n-1)})\\
	    R&=\frac{ Y( 1 - X)}{X(1-Y)}, \quad & S&=\frac{W(1 - X)}{X(1-W)}
	\end{align*}
	with $C$ and $B$ as previously defined, then $ \Prob(B) \leq \E{ \frac{ 1} { 1 + S^k } } $ and 
	$$ \Prob(C) \geq \Prob(B) \geq  (\E{ \frac{ 1} { 1 + R^k } } )^{n-1}$$ 
	$\E{N} \approx A$ where
	\begin{align*}
	 A \equiv (n-1) \E{\frac{ k( 1-R^k )}{ (R^k + 1)(X-Y)} } + \E{\frac{ k( 1-S^k )}{ (S^k + 1)(X- W)} } 
	\end{align*}
	\end{prop}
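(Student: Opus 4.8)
The plan is to read $X$, $Y$, and $W$ as the means of three specific arms under the prior, and then to reduce each of the three claims to a conditional Gambler's Ruin computation. First I would pin down the distributional identifications. By \cref{lemma:max_Xi}, $X=F^{-1}(U^{1/n})$ has the law of the best arm's mean $P_0$. Reusing the construction from the proof of the preceding (association) lemma, conditioning on the maximum $U^{1/n}$ of the $n$ underlying uniforms leaves the remaining $n-1$ of them distributed as independent uniform $(0,U^{1/n})$; writing one such variable as $U^{1/n}V$ with $V$ uniform identifies $Y=F^{-1}(U^{1/n}V)$ as the mean of a typical non-best arm (jointly with $X$), while the maximum of those $n-1$ variables is $U^{1/n}V^{1/(n-1)}$, identifying $W=F^{-1}(U^{1/n}V^{1/(n-1)})$ as the mean of the strongest non-best arm.

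With these identifications, both probability bounds fall out of the Gambler's Ruin formula $\Prob(\text{mean-}x\text{ arm wins})=\frac{1}{1+r^k}$ with $r=\frac{y(1-x)}{x(1-y)}$, applied conditionally on the relevant pair of means. For the lower bound, the preceding lemma gives $\Prob(B)\geq(\Prob(B_1))^{n-1}$ and $B\subset C$ gives $\Prob(C)\geq\Prob(B)$; conditional on $X$ and $Y$ the duel between arm $0$ and arm $1$ is exactly a Gambler's Ruin with ratio $R$, so $\Prob(B_1\mid X,Y)=\frac{1}{1+R^k}$ and hence $\Prob(B_1)=\E{\frac{1}{1+R^k}}$, giving $\Prob(C)\geq\Prob(B)\geq(\E{\frac{1}{1+R^k}})^{n-1}$. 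For the upper bound, $B\subset B^*$ yields $\Prob(B)\leq\Prob(B^*)$, and conditional on $X$ and $W$ the duel between arm $0$ and the strongest competitor is a Gambler's Ruin with ratio $S$, so $\Prob(B^*)=\E{\frac{1}{1+S^k}}$ and therefore $\Prob(B)\leq\E{\frac{1}{1+S^k}}$.

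For $\E{N}$ I would decompose the total sample count arm by arm, working on the dominant event (justified by \cref{rem:Pc=PB}) that the best arm wins every game and each non-best arm is eliminated by arm $0$. On this event, each non-best arm $i$ is sampled once per round until its own duel against arm $0$ terminates, so its sample count equals the number of plays in that duel; conditioning on the means, applying \cref{eq1}, and summing over the $n-1$ competitors contributes $(n-1)\E{\frac{k(1-R^k)}{(R^k+1)(X-Y)}}$ in expectation. The best arm, in contrast, is sampled in every round, so its sample count is the total number of rounds, i.e.\ the time $\max_i T_i$ of the last elimination; approximating this longest duel by the duel against the strongest competitor (mean $W$) and again invoking \cref{eq1} contributes $\E{\frac{k(1-S^k)}{(S^k+1)(X-W)}}$. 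Adding the two contributions gives $\E{N}\approx A$.

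The hard part will be this last step. The clean decomposition $N=\max_i T_i+\sum_{i=1}^{n-1}T_i$ is exact only when the best arm survives all duels and each competitor is knocked out by arm $0$ rather than by another competitor, and even granting this, $\E{\max_i T_i}$ is not literally the expected length of the single duel against the mean-$W$ arm. These two points are precisely why the statement asserts $\E{N}\approx A$ rather than equality, and both should be negligible for the same reason as in \cref{rem:Pc=PB}. By comparison the two probability bounds are exact once the identifications of $X,Y,W$ are in place, so the genuine effort is in setting up those identifications carefully and in arguing that the two approximations entering the $\E{N}$ term are small.
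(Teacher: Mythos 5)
Your proposal is correct and follows essentially the same route as the paper: the paper's own justification of Proposition~\ref{prop:vt} is exactly the combination of \cref{lemma:max_Xi}, the conditional-uniform identification of the non-best means (typical competitor $U^{1/n}V$, strongest competitor $U^{1/n}V^{1/(n-1)}$), the association lemma for the lower bound, $B\subset B^*$ for the upper bound, and the conditional Gambler's Ruin formulas for $\Prob(B_1)$, $\Prob(B^*)$, and the duel lengths via \cref{eq1}. Your closing caveats about the two approximations entering the $\E{N}$ term match the paper's reason for writing $\approx$ rather than equality.
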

	\begin{remark}[Choosing $k$ for VT]\label{rem:k-vt}
	    Combining \cref{rem:Pc=PB} and Proposition \ref{prop:vt} we get an interval which includes $\Prob(C)$ for a given $k$ with a high probability. Now we can do an independent simulation using only uniform (0,1) random variables to find the smallest $k$ such that the interval includes the desired $\alpha$. We show an example of this in \cref{sec:exps}.
	\end{remark}
	
	\subsection{Early Elimination}\label{sec:VT-EE}
	In an improved modification of VT rule we use early elimination (EE) where if an arm is $j$ wins behind after the first $j$ rounds (that is, if the arm had all failures in the first $j$ rounds while another arm had all successes) then that arm is eliminated. To see by how much this can reduce the accuracy of VT, let us compute $\Prob(L),$ where $L$ is the event that the best arm is eliminated early. Let $0$ be the best arm, and let $1, \ldots, n-1$ be the other arms in random order like before. 
	We know 
	\[(P_0, \ldots, P_{n-1}) =_{st} (W, WU_1, \ldots, W U_{n-1} ) \] where $W = U^{1/n}.$  Consequently, letting \[(P_0, \ldots, P_{n-1}) = (W, WU_1, \ldots, W U_{n-1} ) \] yields
	\[\Prob(s_i(j) = j|W) = \E{ (U_i W)^j| W} = \frac{ W^j}{j+1} \]
	\[\Prob(L|W) = (1 - W)^j ( 1 - ( 1 -    \frac{ W^j}{j+1}     )^{n-1}  )\]
	Taking expectations gives 
	\[\Prob(L) = \E{ (1 - U^{1/n})^j ( 1 -  ( 1 - \frac{U^{j/n}}{j+1})^{n-1} ) } \]
	Let us now consider the expected number of non best arms that are eliminated early. We know $s_0, \ldots, s_{n-1}$ are conditionally independent given $W$, then
	\begin{align*}
		& \Prob(s_{n-1} = 0, & \max_{i\in\{0,\dots,n-2\}}s_i = j |W)=\\ &\Prob(s_{n-1} = 0| W ) &\Prob( \max_{i\in\{0,\dots,n-2\}}s_i = j |W)=\\
		&\Prob(s_{n-1} = 0| W ) &( 1 - \prod_{i=0}^{n-2} \Prob(s_i < j |W) )
	\end{align*}
	and 
	\begin{align*}
		\Prob(s_{n-1} = 0| W )&=  \E{ (1 - U_{n-1} W)^j|W}\\
		&=\int_0^1 (1 - xW)^j \, dx
		\\&=  \frac{1}{W} \int_{1-W}^1    y^j \, dy=\frac{1 - (1-W)^{j+1} }{ (j+1) W} 
	\end{align*}
	and
	\begin{align*}
		\prod_{i=0}^{n-2} \Prob(s_i < j |W) &=  (1 - W^j)&  (\Prob(s_1 < j |W))^{n-2} \\
		&=  (1 - W^j) & (\E{ 1- (U_1W)^j |W})^{n-2} \\
		&=  (1 - W^j) & ( 1 - \frac{W^j}{j+1} )^{n-2}
	\end{align*}
	Hence, with $N^*$ being the number of non-best arms that are eliminated early, we have $\E{N^*} = (n-1) D$ where $D$ is
	\begin{multline*}
	    \E{\frac{1 - (1-W)^{j+1} }{ (j+1) W} \left( 1 -   (1 - W^j)  \left( 1 - \frac{W^j}{j+1} \right)^{n-2}  \right)}
	\end{multline*}
	which could be computed via simulation. 
	
	We can also use a randomized $k$ as follows. Let $P_k(C)$ be the probability of a correct choice when using VT with critical value $k,$ and suppose $P_{k-1}(C) < \al < P_{k}(C).$
	The randomized rule that chooses $k$ with probability 
	\[p=\frac{\al - P_{k-1}(C) }{ P_{k}(C)  - P_{k-1}(C) } \]
	and $k-1$ with probability $1-p$ yields the correct choice with probability $\al.$  Another possibility is to use VT along with EE parameter $j^*,$ where $j^*$ is the smallest value $j$ for which EE at $j$ results in a correct choice with probability at least $\al.$ Of course, we could also use VT with critical value $k$ and randomize between EE at $j^* - 1$ and EE at $j^*$. Example \ref{ex:VT} in \cref{sec:exps} is an instance where randomizing among VT rules results in a smaller $\E{N}$ than does VT with EE.
	\section{Play the Winner Rule, Bernoulli Case}\label{sec:PW}%
	Next simple algorithm proposed in the Bayesian setting is {\it{play the winner}} (PW) rule, which in each but the last round continues to sample from each alive (not eliminated) arm until it has a failure. Our proposed PW
	is described concisely in \cref{alg:PW}. In this algorithm $a_i$ is the indicator of alive arms and $l_i$ is the indicator of arms with a loss/failure.
	\begin{algorithm}[tb]
      \caption{Play the winner (PW)}
      \label{alg:PW}
    \begin{algorithmic}
      \STATE set $a_i=1$, and $s_i(0)=0$ for all $i\in\{1,\dots,n\}$
      \REPEAT
      \STATE set $l_i=0$ for all $\{i: a_i=1\}$
      \REPEAT
      \STATE - Sample all arms in $\{i: l_i=0\}$
      \STATE - Set $l_i=1$ for the arms with failure
      \STATE - If $\exists j$ s.t. $l_j=0$ and $l_i=1\ \forall i\neq j$, and $s_j(s)\geq \max_{i\neq j}s_i(s)+k$, then stop and return $j$
      \UNTIL{$|\{i: l_i=0\}|=1$}
      \STATE - Set $a_i=0$ if $\exists j\neq i$ s.t. $s_i(t)\leq s_j(t)-k$
      \UNTIL{$|\{i: a_i=1\}|=1$}
    \end{algorithmic}
    \end{algorithm}
	\noindent
	We should note that:\\
	1. If we define a round by saying that each alive arm is observed until it had a failure, then when $F$ is the uniform $(0, 1)$ distribution, the expected number of plays until the first arm has a failure could be infinite. Therefore we define rounds using subrounds so that the mean number of plays is finite. For instance, suppose $F$ is uniform $(0, 1)$ and  $P_{[1]} > P_{[2] } >  \ldots > P_{[n]}$. Let $N_i$ denote the number of plays in the first round of the arm with probability $P_{[i]}.$ Then, as the density of $P_{[i]}$ is
	$$f_{P_{[i]}} (p)  = \frac{ n!p^{n-i} (1-p)^{i-1}} {(i-1)! (n-i)!}dp ,\ 0<p<1$$
	it follows that with our PW algorithm $\E{N_i} \leq \E{ \frac{1}{1 - P_{[i]} } } < \infty $ when $i > 1.$ In addition,
	\[N_1  \leq k + \max_{2 \leq i \leq n} N_i \leq k + \sum_{i=2}^n N_i\]
	\[\Rightarrow \E{N_1} \leq k + \sum_{i=2}^n \E{N_i} < \infty\]
	
	2. The PW rule as defined in \cite{sw1} and \cite{sw2} was such that the arms are initially randomly ordered. In each round, the alive arms were observed in that order, with each arm being observed until it had a failure. If at any time one of the arms had $k$ fewer successes than another arm, then the former is no longer alive. The process  ends when only a single arm is alive which is declared as the best. 
	
	\subsection{Analysis of PW}\label{sec:PW-an}%
	To begin, suppose there are only $2$ arms, and that their success probabilities are $p_1 > p_2.$ Also, suppose we are going to choose an arm by using the procedure which in each round plays each arm until it has a failure. We stop at the end of a round if one of the arms, say $l$, has $s_l\geq s_{l'}+k$, then $l$ is chosen as the best. Let $q_i = 1-p_i, i=1,2,$ and let $X_{i,r} , i = 1,2,\ r \geq 1,$ be independent with \[\Prob(X_{i,r}  = j) = q_i\,p_i^j, j \geq 0.\]
	We know $X_{i,r}$ is stochastically equal to the number of successes of arm $i$ in round $r$. Letting $Y_r = X_{1,r} - X_{2,r}, r \geq 1$ we get \[\E{e^{\theta Y_r}} = \frac{q_1} {1-p_1 e^{\theta}}\frac{q_2}{1-p_2 e^{\theta}}\]
	It is now easy to check that $\E{e^{\theta Y_r}} = 1$ if   $e^{\theta} = p_2/p_1$. That is,  $ \E{ (p_2/p_1)^{Y_r}} = 1. $  If we now let  $S_m = \sum_{i=1}^m Y_i$ then
	$\; (p_2/p_1)^{S_m}, \, m  \geq 1\, $ is a martingale with mean $1$. Letting
	$\tau = \min \{m: S_m \geq k \;\;\mbox{or} \;\; S_m \leq -k \} $
	it follows by the martingale stopping theorem \cite{ro} that $\E{  (p_2/p_1)^{S_{\tau}}} = 1 $.
	Let $\,p = \Prob(S_{\tau} \geq k)$ be the probability that arm $1$ is chosen. Then
	\begin{align*}
		1 = \E{  (p_2/p_1)^{S_{\tau}}}
		= &\E{(p_2/p_1)^{S_{\tau}}|S_{\tau}\geq k}p +\\ &\E{(p_2/p_1)^{S_{\tau}} | S_{\tau} \leq -k}(1-p)
	\end{align*}
	Letting $X_i, i = 1,2,$ have same distribution as $X_{i,r},$ it follows, by  the lack of memory of $X_i$, that
	\begin{align*}
    \E{  (\frac{p_2}{p_1})^{S_{\tau}} | S_{\tau} \geq k}&= (\frac{p_2}{p_1})^k \E{(\frac{p_2}{p_1})^{X_1}} &= (\frac{p_2}{p_1})^k(\frac{q_2}{q_1})
	\\\E{  (\frac{p_2}{p_1})^{S_{\tau}} | S_{\tau} \leq - k}  &=  (\frac{p_2}{p_1})^{-k} \E{(\frac{p_2}{p_1})^{-X_2}}  &= (\frac{p_2}{p_1})^k (\frac{q_2}{q_1})
	\end{align*}
	Substituting back yields that
	\begin{equation}  \label{eq1-p}
	p = \frac{   1 -   (p_1/p_2)^k (q_2/q_1)  }  {   (p_2/p_1)^k(q_1/q_2)  - (p_1/p_2)^k (q_2/q_1) }
	\end{equation}
	Conditioning on which arm wins yields that \[\E{S_{\tau}} =  (k + \E{X_1} )p + (-k - \E{X_2})(1-p)\]
	Letting $\, m_i = \E{X_i} =  1/q_i - 1 = p_i/q_i,$ the preceding gives $\E{S_{\tau}} = p( m_1 + m_2  +  2k) - (m_2 + k)$. Now Wald's equation yields
	\begin{align}\label{eb}
	\E{{\tau}}
	=\frac{  p( m_1 + m_2  +  2k) - m_2 - k  } {m_1 - m_2}
	\end{align}
	Because $X_{1,r} + X_{2,r} + 2 $ is the  number of plays in round $r$,  it follows that the total number of plays in this setting, call it $T$, is  $\sum_{r=1}^{\tau} (X_{1,r} + X_{2,r} + 2).$  Applying Wald's equation and using \cref{eb}  gives 
	\[\E{T} =  ( p( m_1 + m_2+ 2k) - m_2 - k  ) \frac{ m_1 + m_2 + 2} {m_1 - m_2}\]
	Now suppose we want to calculate the probability of choosing the best arm correctly along with the number of samples. Let $B(p_1, p_2)$ and $N(p_1, p_2)$ be, respectively, the probability that the arm with value $p_1$ is chosen and the mean number of plays before stopping. From \cref{eq1-p}
	we have
	\begin{equation}
	B(p_1, p_2) = \frac{   1 -   (p_1/p_2)^k (q_2/q_1)  }  {   (p_2/p_1)^k(q_1/q_2)  - (p_1/p_2)^k (q_2/q_1) }
	\end{equation}
	Because PW would stop play once the winning arm is ahead by $k$, whereas $\E{T}$ is the mean number of plays when we continue on until a failure occurs, by conditioning on which arm wins we obtain that
	\begin{align*}
	    N(p_1, p_2)=&   \E{T} - \frac{ B(p_1, p_2)}{q_1}  - \frac{1 -  B(p_1, p_2) }{q_2 }    \\
	=&   \big( B(p_1, p_2) ( m_1 + m_2  +  2k) - m_2 - k \big)\\
	&\frac{m_1 + m_2 + 2}        {m_1 - m_2}  - \frac{ B(p_1, p_2)}{q_1}  - \frac{1 -  B(p_1, p_2) }{q_2 }  
	\end{align*}
	where $m_i = p_i/q_i$. 
	
	Now suppose there are $n$ arms 
	with prior distribution $F$. Akin to our VT analysis, suppose that all arms participate in each round and let arm $0$ be the best arm, and randomly number the other arms as $1, \ldots, n-1.$ Then we have

	\begin{lemma}\label{lemma:PW-Bs}
	For the PW algorithm, with $B \equiv B_1B_2 \cdots B_{n-1} $ we have \[\Prob(C)\geq \Prob(B)\geq  (\Prob(B_1))^{n-1}\]
	Also, $\,\Prob(B) \leq \Prob(B^*),\, $ where  $B^*$ is the event that arm $0$ beats the best of arms $1, \ldots, n-1.$
	\end{lemma}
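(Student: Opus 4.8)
The plan is to mirror the coupling-and-association argument used for the VT rule, adapting it to the per-round geometric success counts of PW. I would dispose of the two outer inequalities first. Since PW declares arm $0$ best whenever it wins all of its pairwise Gambler's-Ruin games against arms $1, \ldots, n-1$, we have $B \subset C$ and hence $\Prob(C) \geq \Prob(B)$. Winning every game in particular entails beating the arm of largest mean among $1, \ldots, n-1$, so $B \subset B^*$ and $\Prob(B) \leq \Prob(B^*)$. Both follow purely from set containment, so the work is concentrated in the central bound $\Prob(B) \geq (\Prob(B_1))^{n-1}$.

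For that bound I would reuse the representation from \cref{lemma:max_Xi} and the VT lemma: set $P_0 = F^{-1}(U_0^{1/n})$ and $P_i = F^{-1}((1-U_i)U_0^{1/n})$ for $i>0$ with $U_0, U_1, \ldots, U_{n-1}$ independent uniforms, so that $(P_0, \ldots, P_{n-1})$ has the law of the best mean followed by the others in random order and each $P_i$ ($i>0$) is decreasing in $U_i$. The essential new ingredient is a monotone coupling of the round-success counts. Recalling that the successes $X_{i,r}$ that arm $i$ accrues in round $r$ satisfy $\Prob(X_{i,r} \leq j \mid P_i) = 1 - P_i^{\,j+1}$, I would generate each by inverse transform, choosing the orientation — exactly as the VT proof takes $I_{0,j}$ increasing and $I_{i,j}$ decreasing in their uniforms — so that $X_{0,r}$ is increasing in an independent uniform $V_{0,r}$ while each $X_{i,r}$ ($i \geq 1$) is decreasing in an independent uniform $V_{i,r}$, with both increasing in the underlying mean. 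The arm-$0$ counts $X_{0,r}$, driven by $P_0$ and the shared $V_{0,r}$, enter all $n-1$ pairwise games in common.

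Next I would check the pathwise monotonicity of each indicator $I\{B_i\}$ — the event that the walk $S_m = \sum_{r\leq m}(X_{0,r}-X_{i,r})$ reaches $+k$ before $-k$. Raising some $X_{0,r}$ or lowering some $X_{i,r}$ dominates the partial sums pointwise, and a dominated walk that hits $+k$ before $-k$ forces the dominating walk to do the same; hence $I\{B_i\}$ is nondecreasing in every $X_{0,r}$ and nonincreasing in every $X_{i,r}$. Conditioning on $U_0$ fixes $P_0$ and the law of the shared counts, and with the orientation above $I\{B_i\}$ becomes a nondecreasing function of $V_{0,r}$ (through $X_{0,r}$), of $V_{i,r}$ (since $X_{i,r}$ decreases in $V_{i,r}$ and $B_i$ decreases in $X_{i,r}$), and of $U_i$ (since $P_i$, hence $X_{i,r}$, decreases in $U_i$). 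Thus $I\{B_1\}, \ldots, I\{B_{n-1}\}$ are all increasing functions of one family of independent uniforms, so they are positively associated given $U_0$. This gives $\Prob(B \mid U_0) \geq \prod_{i=1}^{n-1}\Prob(B_i \mid U_0)$, which by exchangeability of arms $1, \ldots, n-1$ given $U_0$ equals $(\Prob(B_1 \mid U_0))^{n-1}$; taking expectations over $U_0$ and applying Jensen's inequality to $x \mapsto x^{n-1}$ yields $\Prob(B) \geq (\E{\Prob(B_1 \mid U_0)})^{n-1} = (\Prob(B_1))^{n-1}$, exactly as in the VT argument.

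The main obstacle I anticipate is the coupling step. I must orient the inverse transform so that $X_{i,r}$ increases with $P_i$ while arm $0$'s and arm $i$'s counts carry opposite monotonicity in their private uniforms, then track each sign so that, after conditioning on $U_0$, every $B_i$ emerges increasing in each shared and private argument. The fact that arm $0$'s counts are common to all $n-1$ games is precisely why the conditioning is on $U_0$ alone rather than on arm $0$'s observations: this keeps the $V_{0,r}$ as genuine common increasing inputs, which is what the association theorem requires. A secondary point to state carefully is the pathwise (not merely stochastic) monotonicity of the hitting event, since association is a coupling-level statement.
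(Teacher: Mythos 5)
Your proof is correct and follows exactly the route the paper intends: the paper itself gives no explicit proof of this lemma, merely asserting it ``akin to our VT analysis,'' and your argument is precisely that VT association proof transplanted to PW, with the two genuinely new details --- the monotone inverse-transform coupling of the geometric round counts and the pathwise monotonicity of the hit-$+k$-before-$-k$ event under pointwise domination of the partial sums --- supplied correctly. Nothing further is needed.
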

	
	Our preceding  analysis yields the following corollary.
	
	\begin{corr}
	With
	$U$ and $V$ being independent uniform $(0, 1)$ random variables, we define
	\begin{align*}
	    X &= F^{-1}( U^{1/n} ),\\
	    Y &= F^{-1}( U^{1/n} V)\\
	    W &= F^{-1}( U^{1/n} V^{1/(n-1)})
	  \end{align*}  
	    then
	    \begin{align*}
	    \Prob(B_1)&=  \E{B(X, Y)}\\
	    \Prob(B^*) &= \E{B(X, W)}
	\end{align*}
and
	\[N \approx A = (n-1) \E{N(X,Y)} + \E{N(X,W)} \]
	\end{corr}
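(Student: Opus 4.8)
The plan is to obtain all three claims from the two-arm analysis of \cref{sec:PW-an} together with the distributional representation of the ordered means. The key input is the fact, established via \cref{lemma:max_Xi} and used already in the VT development, that conditional on the maximum of $n$ independent uniform $(0,1)$ variables the remaining $n-1$ are independent and uniform on $(0,\max)$. Writing $U, U_1, \ldots, U_{n-1}$ for independent uniform $(0,1)$ variables, this gives
\[(P_0, P_1, \ldots, P_{n-1}) =_{st} \big(F^{-1}(U^{1/n}),\, F^{-1}(U^{1/n}U_1),\, \ldots,\, F^{-1}(U^{1/n}U_{n-1})\big),\]
with $F^{-1}(U^{1/n})$ the mean of the best arm and the remaining coordinates the means of the other arms in random order; since $F^{-1}$ is increasing and each $U_i<1$, arm $0$ does carry the largest mean.

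First I would compute $\Prob(B_1)$. Conditioning on $(P_0,P_1)$ reduces the game between arm $0$ and arm $1$ to exactly the two-arm problem solved at the start of \cref{sec:PW-an}, whose probability that the larger-mean arm wins is $B(P_0,P_1)$. From the representation, $P_0=F^{-1}(U^{1/n})=X$ and $P_1 =_{st} F^{-1}(U^{1/n}V)=Y$ for an independent uniform $V$, so taking expectations gives $\Prob(B_1)=\E{B(X,Y)}$.

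Next I would compute $\Prob(B^*)$, the probability that arm $0$ beats the strongest competitor. That competitor has mean $\max_{1\le i\le n-1}P_i = F^{-1}\big(U^{1/n}\max_{1\le i\le n-1}U_i\big)$, and because the maximum of $n-1$ independent uniforms is distributed as $V^{1/(n-1)}$, its mean equals $F^{-1}(U^{1/n}V^{1/(n-1)})=W$. Applying the same two-arm win probability to the pair $(X,W)$, with $X>W$, yields $\Prob(B^*)=\E{B(X,W)}$. Together with \cref{lemma:PW-Bs} these two identities turn the bounds $\Prob(B)\ge(\Prob(B_1))^{n-1}$ and $\Prob(B)\le\Prob(B^*)$ into computable quantities.

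Finally, the approximation $N\approx A$ mirrors the accounting behind \cref{prop:vt}. I would split the total number of plays into the contributions of the competitors and of the best arm: each competitor $i$ is sampled only until its game against arm $0$ terminates and so contributes on average $\E{N(X,Y)}$ plays, giving the first term after summing over the $n-1$ competitors; the best arm participates in every round, so its share is dictated by the last game to finish, namely the one against the strongest competitor of mean $W$, giving $\E{N(X,W)}$. The main obstacle is precisely this last step: the $n-1$ games share arm $0$'s observations and the arms are eliminated at different times, so the decomposition is not exact and $N=A$ holds only up to the small-probability events discounted in \cref{rem:Pc=PB}. The identities for $\Prob(B_1)$ and $\Prob(B^*)$, by contrast, are exact.
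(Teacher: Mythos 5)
Your proposal is correct and follows essentially the same route as the paper, which states this corollary as an immediate consequence of the two-arm analysis of \cref{sec:PW-an} combined with the representation $(P_0,\ldots,P_{n-1}) =_{st} (F^{-1}(U^{1/n}), F^{-1}(U^{1/n}U_1),\ldots)$ already used for \cref{prop:vt}. You also correctly flag that the two probability identities are exact while the expression for $N$ is only an approximation, matching the paper's own treatment.
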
 
	\begin{remark}[Determine $k$ for PW]
	    Based on \cref{lemma:PW-Bs} and a similar argument to \cref{rem:Pc=PB} we can derive a simple procedure like in \cref{rem:k-vt} to find appropriate $k$ for PW algorithm.
	\end{remark}
	
	We also analysed the early elimination version of PW, PW-EE. Since there is no obvious improvement using PW-EE based on our numerical studies, we defer this to the appendix. Also \cref{sec:exps} includes a numerical comparison of VT and PW.
	\section{The Normal Case}\label{sec:normal}%
	In this section we suppose that rewards of arm $i$ are independent Normal random variables with mean $\mu_i$ and variance $\si^2$ where $\si^2$ is known and $\mu_1, \ldots, \mu_n$ are the unknown values of $n$ independent standard Normal random variables, i.e. $F$ is the standard Normal. 
	The VT rule with parameter $c \in \mathbb{R}^+$ for this case is given in \cref{alg:VT-normal}.
	\begin{algorithm}[tb]
  \caption{VT for Gaussian rewards}
  \label{alg:VT-normal}
\begin{algorithmic}
  \STATE \textbf{Input}: all the arms.
  \REPEAT
  \STATE - Sample each remaining arm once.
  \STATE - Eliminate any arm $i$, if $\exists j\neq i$ s.t. $S_i(k) < S_j(k) -c$.
  \UNTIL{there is one arm remaining arm}
\end{algorithmic}
\end{algorithm}

Before elaborating on how to choose a proper $c$ we present some preliminaries concerning Normal partial sums. 

\subsection{Preliminaries}\label{sec:prelim}%
Let  $\Phi$ be the standard Normal distribution function. Define $R(a)=\frac{\Phi(a)}{1-\Phi(a)}$, then we have the following Lemma for a Normal random variable.
\begin{lemma}\label{lemma:Eprelim}
If $Z$ is a Normal random variable with mean $\mu$ and variance $1$, then
\begin{align*}
	\E{e^{- 2 \mu Z}|Z > 0}  &=  R(-\mu)   \\
	\E{Z|Z > 0}  &=  \mu + { e^{ -\mu^2/2} }/( {\sqrt{ 2 \pi } \Phi(\mu)})  \\
	\E{e^{- 2 \mu Z}|Z < 0}   &= R(\mu) \\
	\E{Z|Z < 0}  &=  \mu - {  e^{ -\mu^2/2} }/( {\sqrt{ 2 \pi } (1 - \Phi(\mu) }) 
\end{align*}
\end{lemma}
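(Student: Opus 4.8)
The plan is to evaluate all four conditional expectations directly from the definition, writing each as the ratio of an integral of the density $\frac{1}{\sqrt{2\pi}}e^{-(z-\mu)^2/2}$ to the relevant conditioning probability. Since $Z-\mu$ is standard Normal, the two conditioning events have probabilities $\Prob(Z>0)=1-\Phi(-\mu)=\Phi(\mu)$ and $\Prob(Z<0)=\Phi(-\mu)=1-\Phi(\mu)$, and these are exactly the denominators that will appear in the stated formulas.

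For the two exponential-moment identities, the key observation I would use is the completing-the-square (exponential-tilting) identity
$$e^{-2\mu z}\,\frac{1}{\sqrt{2\pi}}e^{-(z-\mu)^2/2}=\frac{1}{\sqrt{2\pi}}e^{-(z+\mu)^2/2},$$
so that weighting the density by $e^{-2\mu z}$ merely reflects the mean from $\mu$ to $-\mu$. Integrating the right-hand side over $\{z>0\}$ (via the substitution $w=z+\mu$) gives $1-\Phi(\mu)=\Phi(-\mu)$, and dividing by $\Phi(\mu)$ yields $\E{e^{-2\mu Z}\mid Z>0}=\Phi(-\mu)/\Phi(\mu)=R(-\mu)$. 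Integrating the same expression over $\{z<0\}$ gives $\Phi(\mu)$, and dividing by $1-\Phi(\mu)$ yields $\E{e^{-2\mu Z}\mid Z<0}=\Phi(\mu)/(1-\Phi(\mu))=R(\mu)$.

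For the two conditional-mean identities, I would split $z=(z-\mu)+\mu$ inside the integral. The $\mu$ part integrates to $\mu$ times the conditioning probability, producing the leading $\mu$ term after dividing. For the $(z-\mu)$ part, substituting $u=z-\mu$ and using the antiderivative $\int u\,\frac{1}{\sqrt{2\pi}}e^{-u^2/2}\,du=-\frac{1}{\sqrt{2\pi}}e^{-u^2/2}$ over $[-\mu,\infty)$ (respectively $(-\infty,-\mu]$) produces $\pm\frac{1}{\sqrt{2\pi}}e^{-\mu^2/2}$. Dividing by $\Phi(\mu)$ (respectively $1-\Phi(\mu)$) then delivers the stated $\E{Z\mid Z>0}=\mu+\frac{e^{-\mu^2/2}}{\sqrt{2\pi}\,\Phi(\mu)}$ and $\E{Z\mid Z<0}=\mu-\frac{e^{-\mu^2/2}}{\sqrt{2\pi}\,(1-\Phi(\mu))}$.

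There is no genuinely hard step here: all four quantities reduce to elementary single-variable Gaussian integrals. The only points requiring care are recognizing the completing-the-square identity that drives the first and third formulas, and keeping the signs and the two conditioning probabilities $\Phi(\mu)$ and $1-\Phi(\mu)$ straight so that they land as the correct denominators in each case.
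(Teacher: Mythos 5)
Your proof is correct and follows essentially the same route as the paper: the completing-the-square identity $e^{-2\mu z}e^{-(z-\mu)^2/2}=e^{-(z+\mu)^2/2}$ for the exponential moments, and the shift $u=z-\mu$ with the elementary antiderivative for the conditional means. The only (cosmetic) difference is that you compute the two $\{Z<0\}$ identities by direct integration, whereas the paper deduces them from the $\{Z>0\}$ cases via the unconditional identities $\E{e^{-2\mu Z}}=1$ and $\E{Z}=\mu$; both are equally elementary.
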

We prove this Lemma in in the appendix. We also prove the following Lemma for the partial sum and its stopping time in the appendix.
\begin{lemma}\label{lemma:ESn}
	Let $S_m = \sum_{i=1}^m Z_i,\; m \geq 1,$ where $Z_i, i \geq 1 $ are independent Normal random variables with mean $\mu > 0$ and variance $1.$ For  given $b > 0,$ let 
	\[\tau = \min\{m: \; S_m < - b \;\mbox{or}\; S_m > b\}.\] 
	then
	\begin{align*}
	      R(-\mu) e^{- 2 \mu b} &<& \E{e^{- 2 \mu S_{\tau}}| S_{\tau} > b} &<& e^{- 2 \mu b}\\
	      e^{2 \mu b} &<&  \E{e^{- 2 \mu S_{\tau}}| S_{\tau} < - b} &<&
	      e^{2 \mu b}R(\mu)            
	\end{align*}
\end{lemma}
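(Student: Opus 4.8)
The plan is to reduce both displayed inequalities to a statement about the overshoot of the walk past the level it crosses, and then to control that overshoot by comparing it with a single Gaussian increment. Throughout write $V = S_\tau - b$ on the event $\{S_\tau > b\}$ and $V' = -b - S_\tau$ on $\{S_\tau < -b\}$ for the (nonnegative) overshoots; since $\mu > 0$ the walk drifts to $+\infty$, so $\tau < \infty$ almost surely and $V, V'$ are well defined. The two \emph{outer} inequalities are immediate and require no real work: on $\{S_\tau > b\}$ we have $S_\tau > b$, hence $e^{-2\mu S_\tau} < e^{-2\mu b}$ pathwise, giving $\E{e^{-2\mu S_\tau}\mid S_\tau > b} < e^{-2\mu b}$; symmetrically $e^{-2\mu S_\tau} > e^{2\mu b}$ on $\{S_\tau < -b\}$, which yields the lower bound $e^{2\mu b}$ in the second line.

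The content is in the two \emph{inner} bounds, and I would obtain them by conditioning on the position just before the crossing. On $\{\tau = m\}$ with $S_{m-1} = s \in [-b, b]$, the increment $Z_m \sim N(\mu, 1)$ is independent of the past, and crossing upward is exactly the event $\{Z_m > c\}$ with $c := b - s \ge 0$; the overshoot is then $V = Z_m - c$, the overshoot of a single Gaussian increment over the nonnegative level $c$. A short Gaussian integral (completing the square, and using \cref{lemma:Eprelim} to identify the value at $c = 0$) gives
\begin{equation*}
g(c) := \E{e^{-2\mu V}\mid S_{\tau-1} = s,\ \text{up}} = e^{2\mu c}\,\frac{\Phi(-c-\mu)}{\Phi(\mu - c)}, \qquad g(0) = R(-\mu).
\end{equation*}
Since $e^{-2\mu S_\tau} = e^{-2\mu b}\, e^{-2\mu V}$, averaging $g(b - S_{\tau-1})$ over the conditional law of $S_{\tau-1}$ shows $\E{e^{-2\mu S_\tau}\mid S_\tau > b} = e^{-2\mu b}\,\E{g(b - S_{\tau-1})\mid S_\tau > b}$, so the claimed lower bound $R(-\mu)e^{-2\mu b}$ follows once I establish $g(c) \ge R(-\mu) = g(0)$ for every $c \ge 0$. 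The mirror-image computation for the downward crossing produces $\tilde g(c') = e^{-2\mu c'}\,\Phi(\mu - c')/\Phi(-c'-\mu)$ with $\tilde g(0) = R(\mu)$, and the upper bound $e^{2\mu b}R(\mu)$ reduces to $\tilde g(c') \le R(\mu)$ for $c' \ge 0$.

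The hard part will be the monotonicity of $g$ (and of $\tilde g$ in the opposite direction). I would prove it by differentiating: with $h(x) = \phi(x)/(1 - \Phi(x))$ the standard normal hazard rate, one finds $\frac{d}{dc}\log g(c) = 2\mu - h(c+\mu) + h(c-\mu)$, so $g$ is nondecreasing precisely when $h(c+\mu) - h(c-\mu) \le 2\mu$, i.e. when $h$ is $1$-Lipschitz. This is the crux and the only genuinely nontrivial estimate: it is the classical fact that $0 \le h'(x) \le 1$, which follows from the identity $h'(x) = h(x)\big(h(x) - x\big)$ together with the Mills-ratio bound $1/h(x) \ge (\sqrt{x^2 + 4} - x)/2$; the mean value theorem then gives $h(c+\mu) - h(c-\mu) \le 2\mu$ for all $c$. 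Granting this, $g(c) \ge g(0) = R(-\mu)$ with strict inequality for $c > 0$, and since $c = b - S_{\tau-1} > 0$ with positive probability the average is strictly above $R(-\mu)$, delivering the strict lower bound; the downward case is identical with the signs reversed.
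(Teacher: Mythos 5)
Your argument is correct, and its skeleton is the same as the paper's: the two outer bounds are disposed of pathwise using $\mu>0$, and the two inner bounds are obtained by conditioning on the pre-crossing position $S_{\tau-1}$, so that the overshoot becomes the overshoot of a single $N(\mu,1)$ increment over a nonnegative level $c=b-S_{\tau-1}$, which is then compared with the level-$0$ case whose value $R(-\mu)$ (resp.\ $R(\mu)$) is supplied by \cref{lemma:Eprelim}. Where you genuinely diverge is in how that comparison is made. The paper argues by stochastic dominance: a normal conditioned to be positive has increasing failure rate, so the overshoot over level $c\ge 0$ is stochastically smaller than $W\mid W>0$, and applying the decreasing map $x\mapsto e^{-2\mu x}$ finishes the job; this needs only $h'\ge 0$ and gives monotonicity of $\E{f(V_c)}$ for every monotone $f$ at once, a fact the paper reuses for the $\E{Z\mid Z>0}$-type comparisons in its bound on $\E{\tau}$. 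You instead compute $g(c)=e^{2\mu c}\,\Phi(-c-\mu)/\Phi(\mu-c)$ in closed form (which I checked) and differentiate, turning the problem into the complementary hazard-rate inequality $h(c+\mu)-h(c-\mu)\le 2\mu$, i.e.\ $h'\le 1$; your reduction of that to the Komatsu--Birnbaum Mills-ratio bound $1/h(x)\ge(\sqrt{x^2+4}-x)/2$ via the identity $h'=h(h-x)$ is valid, and the same inequality does handle the downward crossing. It is a nice curiosity that the two proofs rest on the two opposite halves of the classical fact $0<h'(x)<1$ for the standard normal hazard rate: the paper's on the lower half (IFR), yours on the upper (Lipschitz). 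Your route is more computational and self-contained but is tailored to the exponential functional; the paper's is shorter given the IFR citation and more reusable. Both yield the strict inequalities, yours because $c>0$ almost surely and $h'<1$ strictly.
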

Now using these Lemmas we can derive a tail bound for the partial sum in the following proposition.
	
	\begin{prop}\label{prop:P_Snb}
	Let's have same definitions as in \cref{lemma:ESn}, then
	\begin{equation*}      
	\frac{ e^{2 \mu b}     - 1}  { e^{2 \mu b}    -   R(-\mu)   e^{- 2 \mu b}  }          <    \Prob(S_{\tau} > b) <    \frac{     e^{2 \mu b}   R(\mu)   - 1}  {    e^{2 \mu b}    R(\mu)  -   e^{-2 \mu b}   }      
	\end{equation*}
	\end{prop}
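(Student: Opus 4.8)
The plan is to reduce the two-sided estimate to an \emph{exact} expression for $p \equiv \Prob(S_\tau > b)$ in terms of two conditional exponential moments, and then feed in the bounds already supplied by \cref{lemma:ESn}. First I would note that since $\E{e^{\theta Z_i}} = e^{\mu\theta + \theta^2/2}$, the nonzero root of $\E{e^{\theta Z_i}} = 1$ is $\theta = -2\mu$, so that $e^{-2\mu S_m},\, m\geq 0,$ is a martingale with mean $e^{-2\mu S_0} = 1$. Exactly as in the PW analysis of \cref{sec:PW-an}, the martingale stopping theorem \cite{ro} then gives $\E{e^{-2\mu S_\tau}} = 1$; the finiteness of the conditional expectations needed to invoke it is precisely what \cref{lemma:ESn} guarantees.

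Next I would condition on which boundary is hit. Writing $a \equiv \E{e^{-2\mu S_\tau}\mid S_\tau > b}$ and $c \equiv \E{e^{-2\mu S_\tau}\mid S_\tau < -b}$, the identity $\E{e^{-2\mu S_\tau}}=1$ becomes
\[ 1 = a\,p + c\,(1-p), \]
and solving for $p$ yields $p = (c-1)/(c-a)$. Since $\mu>0$ and $b>0$, \cref{lemma:ESn} forces $a < e^{-2\mu b} < 1$ and $c > e^{2\mu b} > 1$, so the denominator is nonzero and $p\in(0,1)$, as it must be.

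The final step is a monotonicity argument. Viewing $p = (c-1)/(c-a)$ as a function on the region $\{a<1<c\}$, I would compute $\partial p/\partial c = (1-a)/(c-a)^2 > 0$ and $\partial p/\partial a = (c-1)/(c-a)^2 > 0$, so $p$ is strictly increasing in each of $a$ and $c$ throughout this region. This region contains the full range allowed by \cref{lemma:ESn} because $R(-\mu)=\Phi(-\mu)/\Phi(\mu)<1$ keeps $a$ below $1$. Substituting the upper bounds $a < e^{-2\mu b}$ and $c < e^{2\mu b}R(\mu)$ then yields the claimed upper bound on $p$, while substituting the lower bounds $a > R(-\mu)e^{-2\mu b}$ and $c > e^{2\mu b}$ yields the claimed lower bound.

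The only delicate point I anticipate is the passage $\E{e^{-2\mu S_{m\wedge\tau}}} \to \E{e^{-2\mu S_\tau}}$ as $m\to\infty$: because of the possibly large downward overshoot, $e^{-2\mu S_\tau}$ is unbounded on the event $\{S_\tau<-b\}$, so the stopping identity is not completely automatic. I expect this to be the main obstacle, but it is handled by the integrability already built into \cref{lemma:ESn} (the finite bound $c < e^{2\mu b}R(\mu)$), which together with $\E{\tau}<\infty$ under the positive drift justifies the interchange. Everything after that is the elementary algebra and sign-checking described above.
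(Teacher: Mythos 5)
Your proposal is correct and follows essentially the same route as the paper: the exponential martingale $e^{-2\mu S_m}$ with mean $1$, optional stopping, solving $1 = ap + c(1-p)$ for $p=(c-1)/(c-a)$, and then the observation that this is increasing in both arguments on $\{a<1<c\}$ (the paper phrases this as ``$\frac{x-1}{x-y}$ for $0<y<1<x$ increases in both $x$ and $y$'') before substituting the bounds from \cref{lemma:ESn}. Your additional attention to justifying the passage to the limit in the stopping identity is a welcome refinement the paper leaves implicit, but it does not change the argument.
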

	
	\begin{proof}
	Let $p  = \Prob(S_{\tau} > b).$ Because $\E{e^{-2 \mu Z_i}} = 1,$ it follows that  $\{e^{- 2 \mu S_m}\}_{n \geq 1}$ is a martingale with mean $1$.  Hence, by the martingale stopping theorem
	\begin{align*}
		1&=\E{e^{- 2 \mu S_{\tau}}} \\
		&=\E{e^{- 2 \mu S_{\tau}}| S_{\tau} > b} p + \E{e^{- 2 \mu S_{\tau}}| S_{\tau} < - b}(1-p) 
	\end{align*}
	then rearranging gives
	\begin{equation}  \label{p}
	p =     \frac{    \E{e^{- 2 \mu S_{\tau}}| S_{\tau} < - b} - 1}{   \E{e^{- 2 \mu S_{\tau}}| S_{\tau} < - b} -  \E{e^{- 2 \mu S_{\tau}}| S_{\tau} > b} } 
	\end{equation}
	Since $\;\frac{ x - 1}{x-y}$ for $0<y<1<x$ increases in both $x$ and $y$, the desired inequalities follow from \cref{lemma:ESn}.
	\end{proof}
	
	We can also bound the expectation of $\tau$ as follows. 
	
	\begin{prop}
	With $\tau$ as previously defined in \cref{lemma:ESn} we have
	\begin{align}  
	\E{{\tau}}  \leq  & \frac{ e^{2 \mu b}    R(\mu) - 1}  {    e^{2 \mu b}    R(\mu)   -   e^{-2 \mu b}   }   \bigg( \frac{2b}{\mu} + \frac{ e^{- \mu^2/2} } {\mu\sqrt{ 2 \pi } \Phi( \mu) } + 1\bigg) - \frac{b}{\mu}  \label{bbb}\\
	\E{{\tau}}  \geq  &     \frac{ e^{2 \mu b}     - 1}  { e^{2 \mu b}    -   R(-\mu)   e^{- 2 \mu b}  }   \,  \bigg(\frac{2b}{\mu} +  \Psi(\mu) \bigg)  - \frac{b}{\mu} + \Psi(\mu) \label{bbb2}
	\end{align}
	where $\Psi(\mu)=1-\frac{  e^{ -\mu^2/2} }{\mu \sqrt{ 2 \pi } (1 - \Phi(\mu) )}$.
	\end{prop}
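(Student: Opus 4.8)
The plan is to combine Wald's equation with a decomposition over the two exit events, and then to control the mean overshoot past each barrier using the increasing-failure-rate (equivalently, decreasing mean-residual-life) property of the Normal increments. Throughout, let $Z$ denote a generic increment with the law of the $Z_i$, i.e. $Z \sim N(\mu,1)$.

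First, since the drift $\mu>0$ makes $\tau$ integrable, Wald's equation gives $\E{S_\tau} = \mu\,\E{\tau}$, so it suffices to bound $\E{S_\tau}$. Writing $p = \Prob(S_\tau > b)$ and conditioning on which barrier is hit,
\[
\E{S_\tau} = \E{S_\tau \mid S_\tau > b}\, p + \E{S_\tau \mid S_\tau < -b}\,(1-p).
\]
Set $A^+ = \E{S_\tau \mid S_\tau > b}$ and $A^- = \E{S_\tau \mid S_\tau < -b}$. Because $A^+ > b > -b > A^-$, the quantity $(A^+ - A^-)p + A^-$ is strictly increasing in $p$, so I can insert the two-sided bounds on $p$ from \cref{prop:P_Snb}: for \cref{bbb} I use $p \le p_{\max}$ (the larger fraction) and for \cref{bbb2} I use $p \ge p_{\min}$ (the smaller fraction). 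This keeps the coefficients $p_{\max},\,1-p_{\max},\,p_{\min},\,1-p_{\min}$ all positive, which holds since $0 < p_{\min} \le p_{\max} < 1$ (a short check on the fractions in \cref{prop:P_Snb}).

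The heart of the argument is bounding $A^+$ and $A^-$ through the terminal increment. On $\{S_\tau > b\}$ the overshoot is $S_\tau - b = Z_\tau - (b - S_{\tau-1})$ with $b - S_{\tau-1}\ge 0$; conditioning on $\mathcal F_{\tau-1}$ and on the crossing event, and using that $Z_\tau$ is independent of the past, this overshoot has mean $\E{Z - c \mid Z > c}$ with $c = b - S_{\tau-1}\ge 0$. Since the Normal law has increasing failure rate, its mean residual life $c \mapsto \E{Z - c \mid Z > c}$ is nonincreasing, whence $\E{Z - c \mid Z > c} \le \E{Z \mid Z > 0}$; averaging over the crossing paths yields $A^+ \le b + \E{Z \mid Z > 0}$, while trivially $A^- \le -b$. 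Symmetrically, on $\{S_\tau < -b\}$ the undershoot is $S_\tau + b = Z_\tau + (b + S_{\tau-1}) \ge Z_\tau$, and the same monotonicity applied to $-Z$ gives $A^- \ge -b + \E{Z \mid Z < 0}$, while trivially $A^+ \ge b$.

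Substituting these four bounds into the monotone-in-$p$ expression, and then invoking \cref{lemma:Eprelim} to replace $\E{Z \mid Z > 0} = \mu + e^{-\mu^2/2}/(\sqrt{2\pi}\,\Phi(\mu))$ and $\mu^{-1}\E{Z \mid Z < 0} = \Psi(\mu)$, and finally dividing by $\mu$, produces the prefactors and additive terms of \cref{bbb} and \cref{bbb2} after routine algebra. The main obstacle is exactly the overshoot control: one must argue carefully that conditioning on the first-passage event does not bias the terminal increment beyond the single-step truncated mean, which is precisely where the IFR/mean-residual-life monotonicity of the Gaussian is needed (\cref{lemma:ESn} provides the companion bounds on $\E{e^{-2\mu S_\tau}\mid\,\cdot\,}$ used for $p$). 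The remaining work is bookkeeping that tracks signs — note $\Psi(\mu) < 0$ — so that the inequalities on $A^{\pm}$ and on $p$ compose in the intended direction.
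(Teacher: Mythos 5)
Your proof follows essentially the same route as the paper's: Wald's equation, conditioning on which barrier is hit, bounding the overshoot/undershoot by the single-step truncated mean via the Gaussian IFR (decreasing mean-residual-life) property --- exactly the stochastic-domination step the paper delegates to the proof of \cref{lemma:ESn} --- and then inserting the bounds on $p$ from \cref{prop:P_Snb}. One small remark: carrying your algebra to the end gives the coefficient $\frac{2b}{\mu} - \Psi(\mu)$ on $p$ in the lower bound (consistent with the paper's own displayed computation, since $\E{Z \mid Z<0} = \mu\Psi(\mu)$ enters with a minus sign there), so the $+\Psi(\mu)$ inside the parenthesis of \cref{bbb2} appears to be a sign typo in the statement rather than a flaw in your argument.
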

	\begin{proof}
	 Wald's equation gives
	\begin{align*}
		\E{{\tau}} \mu &= \E{S_{\tau}|S_{\tau} > b} p + \E{S_{\tau} |S_{\tau} < - b} (1-p) \\
		& \leq  (b + \E{W|W > 0} ) p -b (1-p) \\
		&= p(2b + \frac{ e^{- \mu^2/2} } { \sqrt{ 2 \pi } \Phi( \mu) }  + \mu) - b 
	\end{align*}
	where the first inequality used, as shown in \cref{lemma:ESn}, that $S_{\tau} | \{S_{\tau} > b \}$ is stochastically smaller than $ b + W| \{W> 0\}$. Inequality (\ref{bbb}) now follows from Proposition \ref{prop:P_Snb}.  The lower bound follows from writing $\E{{\tau}}$ as follows
	\begin{align*}
		 &\E{S_{\tau}|S_{\tau} > b} p + \E{S_{\tau} |S_{\tau} < - b} (1-p) &\geq&\\
		 &b p  + (-b + \E{W_1|W_1 < 0} )(1-p) &=&\\&
		\bigg(2b - \mu + \frac{  e^{ -\mu^2/2} } {\sqrt{ 2 \pi } (1 - \Phi(\mu) } \bigg) p \ \ -b \ + \ \ \mu &- &\frac{  e^{ -\mu^2/2} } {\sqrt{ 2 \pi } (1 - \Phi(\mu) } 
	\end{align*}
	where the inequality used  that $S_{\tau} | \{S_{\tau} < - b \}$ is stochastically larger than $ -b + W_i | \{W_i< 0\}.$  Now inequality (\ref{bbb2}) follows from  Proposition  \ref{prop:P_Snb}.
	\end{proof}
	
	Now we can approximate $p=\Prob(S_{\tau} > b)$  and $\E{{\tau}}$ by ``neglecting the excess" and assuming  $(S_{\tau}|S_{\tau} > b) \approx_{st}  b$ and   $(S_{\tau}|S_{\tau} < -b) \approx_{st}  -b.$  From \cref{p} this  gives that 
	\begin{equation}  \label{pp}
	p \approx\frac{  e^{ 2 \mu b} -1}{ e^{ 2 \mu b} - e^{ -2 \mu b}}
	\end{equation}
	Also,
	$\; \mu \E{{\tau}} \approx b p - b(1-p) ,\;$ and so \cref{pp} gives that 
	\begin{equation}  \label{ex}
	\E{{\tau}} \approx  \frac{  2b( e^{ 2 \mu b} -1)}{\mu( e^{ 2 \mu b} - e^{ -2 \mu b})} - {b}/{\mu}
	\end{equation}
	
	Example \ref{ex:normal1} gives an instance of this approximation that validates it.
	\begin{ex}\label{ex:normal1}
	Suppose $b = 3, \mu = 1,$ then \cref{bbb},   \cref{bbb2},  and \cref{ex}  yield that 
	$2.9838     \leq \E{{\tau}} \leq  4.2842$   , and $\quad \E{{\tau}} \approx 2.9852    $
	\end{ex}
	It is easy to generalize the above results to the case where variance is not 1 as follows.
	\begin{corr}\label{corr:P_SNsigmaC}
	Let $S_m = \sum_{i=1}^n V_i,\; n \geq 1,$ where $V_i, i \geq 1 $ are independent Normal random variables with mean $\mu > 0$ and variance $2 \si^2.$  For  given $c > 0,$ let $\tau$ be as before, then
    \begin{align*}
         \frac{ e^{ \mu c/\si^2}     - 1}  { e^{ \mu c/\si^2}    -  R( - \frac{ \mu } { \si \sqrt{2}} )   e^{-  \mu c/\si^2}  }          <    \Prob(S_{\tau } > c) \\    
         \frac{     e^{ \mu c/\si^2}    R(  \frac{ \mu } { \si \sqrt{2}} )   - 1}  {    e^{ \mu c/\si^2}     R(  \frac{ \mu } { \si \sqrt{2}} )    -   e^{- \mu c/\si^2}   } > \Prob(S_{\tau } > c)      
    \end{align*}
	Moreover,
	$$ \E{\tau } \approx   \frac{  2c \, ( e^{ \mu c/\si^2} -1)}{\mu( e^{ \mu c/\si^2} - e^{ - \mu c/\si^2})}-  {c}/{\mu}   $$
	\end{corr}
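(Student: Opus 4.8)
The plan is to reduce this statement to the unit-variance results already established, namely Proposition~\ref{prop:P_Snb} and the approximation \cref{ex}, by a simple rescaling of the random walk. The key observation is that if $V_i$ has mean $\mu$ and variance $2\si^2$, then the standardized increments $Z_i \equiv V_i/(\si\sqrt{2})$ have mean $\mu' \equiv \mu/(\si\sqrt{2})$ and variance $1$. Writing $S'_m = \sum_{i=1}^m Z_i = S_m/(\si\sqrt{2})$, the boundary-crossing events transform cleanly: $S_m > c$ is equivalent to $S'_m > b$, and $S_m < -c$ is equivalent to $S'_m < -b$, where $b \equiv c/(\si\sqrt{2})$.

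First I would note that, because $S'_m$ and $S_m$ differ only by the positive constant factor $\si\sqrt{2}$, the first index at which either walk crosses its respective boundary is the same; hence the stopping time $\tau$ is unchanged and coincides with the one in \cref{lemma:ESn} applied to the unit-variance walk $S'_m$ with level $b$. Moreover $\Prob(S_\tau > c) = \Prob(S'_\tau > b)$, so the probability bounds follow directly from Proposition~\ref{prop:P_Snb} with $\mu$ replaced by $\mu'$ and $b$ as above.

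The only remaining work is to verify that the substitution produces the stated expressions. The product appearing in every exponent satisfies $2\mu' b = 2 \cdot \frac{\mu}{\si\sqrt{2}} \cdot \frac{c}{\si\sqrt{2}} = \frac{\mu c}{\si^2}$, so $e^{2\mu' b} = e^{\mu c/\si^2}$, while $R(\pm\mu') = R(\pm\mu/(\si\sqrt{2}))$. Plugging these into the two bounds of Proposition~\ref{prop:P_Snb} yields exactly the claimed inequalities for $\Prob(S_\tau > c)$. For the expectation, I would apply \cref{ex} to the rescaled walk $S'_m$, giving $\E{\tau} \approx \frac{2b(e^{2\mu' b}-1)}{\mu'(e^{2\mu' b}-e^{-2\mu' b})} - b/\mu'$, and then use $2\mu' b = \mu c/\si^2$ together with $b/\mu' = c/\mu$ and $2b/\mu' = 2c/\mu$ to recover the stated approximation.

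There is no real obstacle here: the argument is a change of variables, and the verification is routine algebra. The one point that warrants care is confirming that the stopping time and the crossing events are genuinely invariant under the positive scaling — which holds precisely because $\si\sqrt{2} > 0$ preserves the order relations defining the two boundaries — so that Proposition~\ref{prop:P_Snb} and \cref{ex} apply verbatim to the rescaled walk. Everything else is substitution.
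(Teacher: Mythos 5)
Your proposal is correct and matches the paper's own proof, which likewise sets $Z_i = V_i/(\si\sqrt{2})$, $b = c/(\si\sqrt{2})$, and applies Proposition~\ref{prop:P_Snb} and \cref{ex}; your explicit verification that $2\mu' b = \mu c/\si^2$ and $b/\mu' = c/\mu$ just spells out the substitution the paper leaves implicit.
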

	\begin{proof}
	  Let $Z_i = \frac{V_i}{\si \sqrt{2}},$ note that $\E{Z_i} = \frac{ \mu}{\si  \sqrt{2}}.$  Now, using $b = \frac{ c}{\si  \sqrt{2}},$ apply Proposition \ref{prop:P_Snb} and \cref{ex} to get the desired result.
	\end{proof}
	
	\subsection{Analyzing the VT Rule in the Normal Case}\label{VT-an-Norm}
	In this section, we derive a lower bound and an effective approximation of $\Prob(C),$ by a similar argument as in the Bernoulli case. With similar indexing of arms we imagine a ``Gambler's Ruin" game between arms $0$ and $i$ where the goal is $c$. We can again show exactly as before that 
	\[\Prob(C) \geq \Prob(B) \geq (\Prob(B_1))^{n-1}\] and $ \Prob(B) \leq \Prob(B^*) $. Given the mean values $\mu_0, \mu_1, \ldots, \mu_{n-1}$, the difference between a sample from arm $0$ and another arm is a Normal random variable with variance $2 \si^2.$  Letting $LB(\mu)$ and $UB(\mu)$
	be the lower and upper bounds on $\Prob(S_{\tau} > c)$ in Corollary \ref{corr:P_SNsigmaC}, it yields the following proposition. 
	\begin{prop}
	Let 
	$U$ and $V$ be independent uniform $(0, 1)$ random variables, and let
	\begin{align*}
	    X =& \Phi^{-1}( U^{1/n} ) -  \Phi^{-1}( U^{1/n} V) 
	    \\Y =& \Phi^{-1}( U^{1/n} ) -  \Phi^{-1}( U^{1/n} V^{1/(n-1) } )
	\end{align*}
	Then
	\[ \Prob(C) \geq \Prob(B) \geq  (\E{ LB(X)} )^{n-1}\] and \[\Prob(B) \leq \E{UB(Y)}\]
	\end{prop}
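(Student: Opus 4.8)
The plan is to reduce everything to the two‑arm Gambler's‑ruin tail bounds already packaged in \cref{corr:P_SNsigmaC}. Since the chain $\Prob(C) \geq \Prob(B) \geq (\Prob(B_1))^{n-1}$ and $\Prob(B) \leq \Prob(B^*)$ has already been established (by the same association‑plus‑Jensen argument used in the Bernoulli case), the only remaining work is to express $\Prob(B_1)$ and $\Prob(B^*)$ in terms of the random variables $X$ and $Y$, and then invoke the bounds $LB$ and $UB$.

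First I would record the joint law of the means. By \cref{lemma:max_Xi} with $F=\Phi$, the best mean satisfies $\mu_0 =_{st} \Phi^{-1}(U^{1/n})$, and, exactly as in the Bernoulli derivation (conditioning on the maximum of $n$ uniforms, so the remaining $n-1$ are uniform on $(0,U_{\max})$), the other means can be written as $\mu_i =_{st} \Phi^{-1}(U^{1/n}V_i)$ with $V_1,\dots,V_{n-1}$ independent uniform $(0,1)$ and independent of $U$. Consequently $X =_{st} \mu_0 - \mu_1$, the gap between the best arm and a uniformly random non‑best arm. For the \emph{best} of the non‑best arms, the order‑statistic identity $\max_{1\leq i\leq n-1}V_i =_{st} V^{1/(n-1)}$ gives $\max_{1\leq i\leq n-1}\mu_i =_{st} \Phi^{-1}(U^{1/n}V^{1/(n-1)})$, so $Y =_{st} \mu_0 - \max_{1\leq i\leq n-1}\mu_i$. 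Both $X$ and $Y$ are strictly positive almost surely since $V<1$.

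Next I would condition on the means and apply the tail bounds. Given $\mu_0,\dots,\mu_{n-1}$, the per‑round increment $S_0(t)-S_i(t)$ is Normal with mean $\mu_0-\mu_i$ and variance $2\si^2$, and $B_i=\{S_\tau>c\}$ is exactly the crossing event of \cref{corr:P_SNsigmaC}. Hence $\Prob(B_1 \mid \text{means}) > LB(\mu_0-\mu_1)$, and, conditioning additionally on which non‑best arm is largest, $\Prob(B^* \mid \text{means}) < UB(\mu_0-\max_{i\geq 1}\mu_i)$. Taking expectations and using the distributional identifications above yields $\Prob(B_1)\geq \E{LB(X)}$ and $\Prob(B^*)\leq \E{UB(Y)}$. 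Because $X>0$ forces $LB(X)>0$ (for positive argument the numerator $e^{Xc/\si^2}-1$ and the denominator are both positive), we have $\E{LB(X)}\geq 0$, so raising the first inequality to the power $n-1$ preserves its direction, giving $\Prob(C)\geq\Prob(B)\geq (\Prob(B_1))^{n-1}\geq (\E{LB(X)})^{n-1}$; combining $\Prob(B)\leq\Prob(B^*)\leq \E{UB(Y)}$ completes the argument.

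The main obstacle I anticipate is the $B^*$ bound: unlike $B_1$, the ``best of the others'' is itself a random arm, so one must condition on the full vector of means (equivalently, on the identity of the largest non‑best arm) before the single‑game bound of \cref{corr:P_SNsigmaC} can be applied with mean gap $\mu_0-\max_{i\geq 1}\mu_i$. Lining up the order‑statistic identity $\max_i V_i =_{st} V^{1/(n-1)}$ with the variance‑$2\si^2$ increment, and checking that the $t\mapsto t^{n-1}$ monotonicity step is legitimate (which needs $\E{LB(X)}\geq 0$), are the only places requiring genuine care; the rest is a transcription of the Bernoulli argument with $\Phi$ in place of the general prior and \cref{corr:P_SNsigmaC} in place of the explicit Gambler's‑ruin formula.
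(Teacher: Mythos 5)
Your proposal is correct and follows the same route the paper takes: the chain $\Prob(C)\geq\Prob(B)\geq(\Prob(B_1))^{n-1}$ and $\Prob(B)\leq\Prob(B^*)$ is carried over verbatim from the Bernoulli argument, the means are represented via \cref{lemma:max_Xi} and the order-statistic identity so that the gaps are $X$ and $Y$, and \cref{corr:P_SNsigmaC} is applied conditionally to the variance-$2\si^2$ increments before taking expectations. The extra checks you flag (positivity of the gap so the corollary applies, and $\E{LB(X)}\geq 0$ so that raising to the power $n-1$ is order-preserving) are details the paper leaves implicit, but they do not change the argument.
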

	Now we can approximate $\E{N}$ 
	by estimating the mean number of plays of each non best arm by the mean number of plays in their game against the best arm, and also approximating the mean number of plays of the best arm by the mean number of plays in its game against the second best arm. Hence, using \cref{ex} we have
	\[\E{N} \approx A \equiv        (n-1)  \E{M(X)} + \E{M(Y)}\]
	where 
	\[M(\mu) =  \frac{  2c \, ( e^{ \mu c/\si^2} -1)} {\mu( e^{ \mu c/\si^2} - e^{ - \mu c/\si^2})} -  \frac{c}{\mu}\]

\section{Experimental Results}\label{sec:exps}

In this section we present experiments that establish the efficiency of our algorithms and help us evaluate our approximations of $N$ and $\Prob(C)$.

\subsection{VT, Bernoulli}
	 Here we assume $F$ is the uniform $(0, 1)$ distribution and arm rewards are Bernoulli, then we compare the estimations with the simulation results. Let $\tilde{\sigma}$ be the standard deviation of estimate of $A$ and $m$ be the number of simulation runs. \cref{tab:exper} shows the results of the algorithm and \cref{tab:estimate} shows the estimation for the same cases with $m=10^6$.
	For instance, \cref{tab:estimate} shows that to obtain $99$ percent accuracy with $n=10$ arms, it seems that the elimination number has to be $57$ or $58$. Also by comparing \cref{tab:exper} and \cref{tab:estimate} we can confirm that the quality of the estimations is very high.
    \begin{table}[ht]
        \centering
        \begin{tabular}{c c c c c c}
        \hline
        $n$& $k$& $\Prob(C)$& $\E{N}$& $\tilde{\sigma}$\\\hline
        10& 50& 0.9886 & 5466.318& 17.34\\
        5& 10& 0.9540& 358.3993& 1.156\\\hline
        \end{tabular}
        \vspace{.05in}	
        \caption{VT Experimental results.}
        \label{tab:exper}
    \end{table}
    \begin{table}[ht]
        \centering
        \begin{tabular}{c c c c c c c}
        \hline
             $n$& $k$& $\Prob(B_1)$& $\Prob(B^*)$& $A$& $\tilde{\sigma}$  \\\hline
             10& 50& 0.9885& 0.9889& 5460.539& 26.6991\\
             & 57 & 0.9896 & 0.9901& 6462.372& 32.9701\\
             & 58& 0.9900& 0.9903& 6545.46&  32.4571\\
             5&  10& 0.9523& 0.9561& 358.3983& 0.8256\\
             \hline
        \end{tabular}
        \vspace{.05in}	
        \caption{VT Estimation results.}
        \label{tab:estimate}
    \end{table}
	
	We compare the VT rule with recent algorithms in the literature.
	We use \emph{Track and Stop} (TaS) algorithm from \cite{gk}. We let TaSC stand for TaS with C tracking and TaSD for D tracking. We also employ \emph{Chernoff Racing} (ChR) \cite{gk}, \emph{Kullback-Leibler Racing} (KL-R), and \emph{KL-LUCB} \cite{kaufmann2013information} algorithms. 
	\cref{tab:other} is borrowed from \cite{gk}. The table consists of the results for two cases: the first case having $n=4$ with probabilities $(0.5, 0.45, 0.43, 0.4), $ and  the second having $n=5$ with probabilities $(0.3, 0.21, 0.20, 0.19, 0.18)$.  The parameters of these algorithms are chosen to guarantee at least $\al=0.9$. \cref{tab:other-VT} shows the result of $m=10^4$ simulation runs of VT algorithm for these cases.
	\begin{table}[ht]
		\begin{center}
			\begin{tabular}{ c c c c c c}
				\hline
				Case  & TaSC & TaSD  & ChR & KL-LUCB & KL-R\\
				\hline
				1 & 3968 & 4052 & 4516 & 8437 & 9590\\
				2 & 1370& 1406 & 3078 & 2716 & 3334 \\\hline
			\end{tabular}
		\end{center}
		\vspace{.05in}	
		\caption{Recent algorithms results}
		\label{tab:other}
	\end{table}
	\begin{table}[ht]
		\begin{center}
			\begin{tabular}{ c c c c c c c c}
				\hline
				Case  & $\al$ &$k$ &   $\Prob(C)$& $\E{N}$\\
				\hline
				1& 0.99& 42& 0.9999& 2738\\
				& 0.97& 15& 0.9998& 905\\
				2& 0.99 & 47& 0.9999& 2372\\
				& 0.97& 16 & 0.998& 832
				\\\hline
			\end{tabular}
		\end{center}
		\vspace{.05in}	
		\caption{VT results to compare with table \cref{tab:other}}
		\label{tab:other-VT}
	\end{table}
	
	Because our algorithm assumes knowledge of a prior distribution, in cases where there is no reason to assume that we know what the prior is, it seems reasonable to assume a uniform $(0, 1)$ prior and choose a larger accuracy than is actually desired. So suppose we do so and require $\al=0.99$ and $\al=0.97$. \cref{tab:other-VT} shows the results with proper $k$'s for VT.
	As we can see, the VT algorithm significantly outperforms the newer algorithms even with fixed probabilities. Although, to be fair we should mention that, under the uniform $(0, 1)$ prior, $k=5$ is sufficient for both $n=4$ or $n=5$ to obtain $90$ percent accuracy based on Proposition \ref{prop:vt} estimations. In case 1, this yields $\E{N}=166$, but $\Prob(C)=0.601$. In case 2 it has $\E{N}=213.2$, with $\Prob(C)=0.81.$

\subsection{VT with EE, Bernoulli}
	First we evaluate the estimations of $\Prob(L)$ and $\E{N^*}$ to illustrate how efficient EE could be. \cref{tab:PL} shows the values of $\Prob(L)$ and $\E{N^*}$ for a variety of $n$ and $j$ when $F$ is the uniform $(0, 1)$ distribution. We can observe how small $\Prob(L)$ and $\E{N^*}$ are.
	\begin{table}[ht]
		\begin{center}
			\begin{tabular}{c c c c}
			    \hline
				n & j & $\Prob(L)$ & $\E{N^*}$ \\\hline
				5 & 2 & 0.02053 & 1.317 \\
				& 3 & 0.00336 & 0.851  \\
				& 4 & 0.00059 & 0.590  \\
				& 5 & 0.00011 & 0.432  \\\hline
				10 & 2 & 0.01278 & 3.234 \\
				& 3 & 0.00201 & 2.310 \\
				& 4 & 0.00033 & 1.731 \\
				& 5 & 0.00006 & 1.343 \\\hline
				20 & 2 & 0.00429 & 6.659 \\
				& 3 & 0.00053& 4.978  \\
				& 4 & 0.00007& 3.942  \\
				& 5 & 0.00001& 3.229 \\\hline
			\end{tabular}
			\vspace{.05in}
			\caption{Numerical Example of VT with early elimination}
			\label{tab:PL}
		\end{center}
	\end{table}

Next example compares VT with VT-EE to show its effectiveness. 
\begin{ex}\label{ex:VT}
Suppose $n= 5$  and $\al =0.95.$ 
\cref{tab:VT-VT-EE} shows the simulated results based on $5\times10^5$ runs. Here $\tilde{\sigma}$ is standard deviation of the $\E{N}$ estimator.)

\begin{table}[ht]
	\begin{center}
		\begin{tabular}{c c c c c c}
			\hline
			Algorithm & $k$ & $j$ & $\Prob(C)$ & $\E{N}$ & $\tilde{\sigma}$ \\
			\hline
			VT & 9   &- & 0.948 & 313.64 & 2.11 \\
			& 10 &- & 0.954 &358.40 &1.156  \\
			\hline
			VT-EE & 10  & 2 &   0.9385 & 335.52 & 8.29 \\
			& 10  & 3 &  0.9523  &348.27 & 2.70\\
			\hline
		\end{tabular}
	\end{center}
	
	\vspace{.05in}	
	\caption{Results for VT and VT with early elimination, VT-EE}
	\label{tab:VT-VT-EE}
\end{table}
Based on these results, randomizing among VT with $k=9$ and $k=10$  to obtain $\Prob(C) =0.95$  has  mean $(2/3)313.64 + (1/3)358.40= 328.56,$ which is smaller than what can be obtained with VT with EE. It is also better than the recently proposed algorithms. Of these, the Chernoff-Racing bound algorithm performs the best between others in the literature, giving an average number of $423.4$ with accuracy $0.953.$
\end{ex}

	\subsection{VT versus PW, Bernoulli}\label{sec:VT-PW}
	Based on  numerical experiments, VT and PW have    roughly similar performances  when $F(x) = x$.
	When $n=5,$ simulation yields the following results for PW in \cref{tab:PW}.
	\begin{table}[ht]
		\begin{center}
			\begin{tabular}{c c c c }
				\hline
				$k$ & $\Prob(C)$ & $\E{N}$ & $\tilde{\sigma}$ \\
				\hline
				42 &  0.9494 &319.78 & 1.64 \\
				43 &  0.9502 & 327.80 &  1.65  \\
				48 &  0.9543 & 375.4 & 0.899 \\  
				\hline 
			\end{tabular}
		\end{center}
		\vspace{.05in}	
		\caption{Numerical examples of PW}
		\label{tab:PW}
	\end{table}
	The results show that choosing PW with $k=42$ with probability $0.25$  and $k=43$  with probability $0.75$  results in $\Prob(C) =0.95,$ and requires,  on average,   $325.795$ observations, which is slightly less than the average of  $328.56$  which, as shown in Example \ref{ex:VT},  can be obtained by
	a randomization of  VT rules to obtain $\Prob(C) =0.95.$  On the other hand if we wanted  $\al =0.954,$ then  both VT with $k=10$  and PW with $k=48$ achieve that, with  VT having a mean of $358.4$ observations,  compared to $375.4$ for PW. Because the average number of trials needed for PW with $k=47$ is $367.05,$ randomizing between PW(47) and PW(48) still would not be as good as VT(10).

\subsection{Experimental Result for Normal rewards}%
This section includes experiments that aim at comparing our algorithms with the literature for Normal rewards with standard Normal prior.
In \cref{Fig:PcNormal} and \cref{Fig:EnNormal} we use simulation to compare the performance of VT with the most quoted algorithms of the recent literature: \emph{lilUCB}, \emph{TrackAndStop}, and \emph{Chernoff-Racing}.
The lilUCB algorithm \cite{jmnb} uses upper confidence bounds that are based on the law of the iterated logarithm  for the expected reward of the arms.  At each stage it uses  the arm with the largest upper bound. We use a heuristic variation of the lilUCB, called lilUCB-H, which performs somewhat better than the original \cite{jmnb}. The TrackAndStop algorithm in \cite{gk} tracks the lower bounds on the optimal proportions of the arm rewards and uses a stopping rule based on Chernoff's Generalized Likelihood Ratio statistic. The Chernoff algorithm is similar to TrackAndStop, but rather than track the optimal proportions it instead chooses between the empirical best and  second-best. 
The results below are based on $10^4$ simulation runs, with each run beginning by resampling $\mu_1, \ldots, \mu_n$ from $F$ (randomized). \cref{tab:VTk} shows the proper $k$ for each problem instance. As the results show, recent algorithms are over conservative, i.e. they stop very late and their $\Prob(C)$ is extra large which makes their $\E{N}$ too large. This is aligned with the observation in \cite{russo2018simple} which states that many algorithms in the literature have conservative stopping rule and could be improved a lot.

\begin{figure}[ht]
\centering
  \centering
  \includegraphics[width=.5\textwidth]{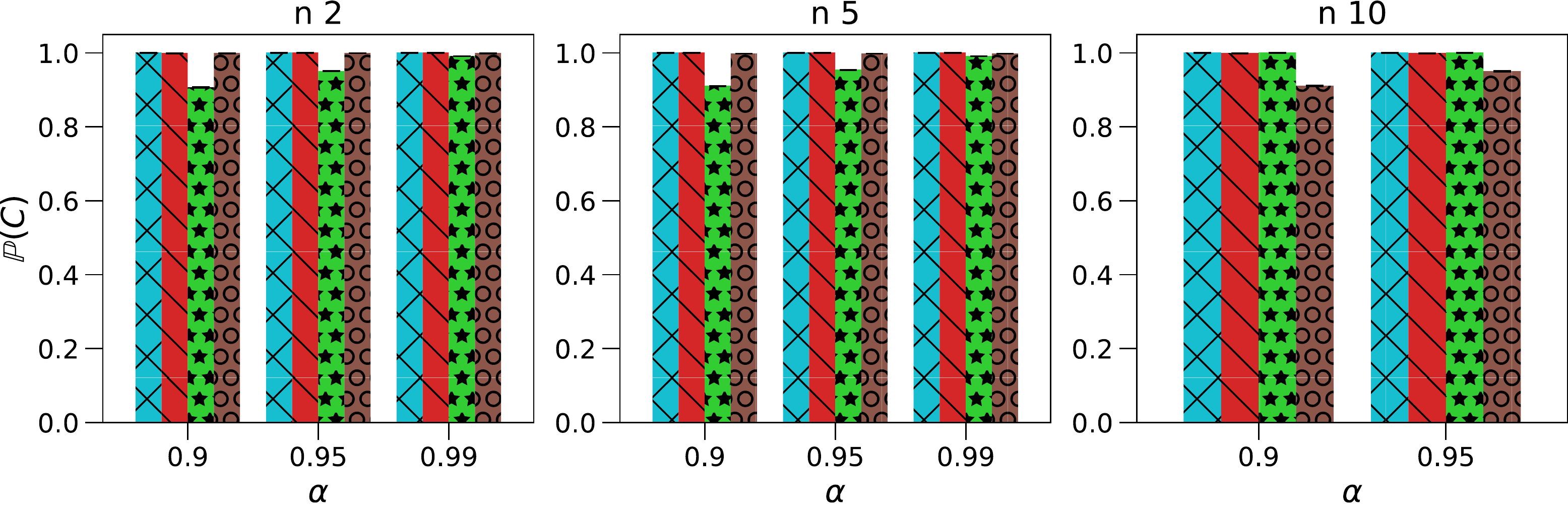}
    \caption{$\Prob(C)$ for Normal rewards.}\label{Fig:PcNormal}
\end{figure}

\begin{figure}[ht]
\centering
  \centering
  \includegraphics[width=.5\textwidth]{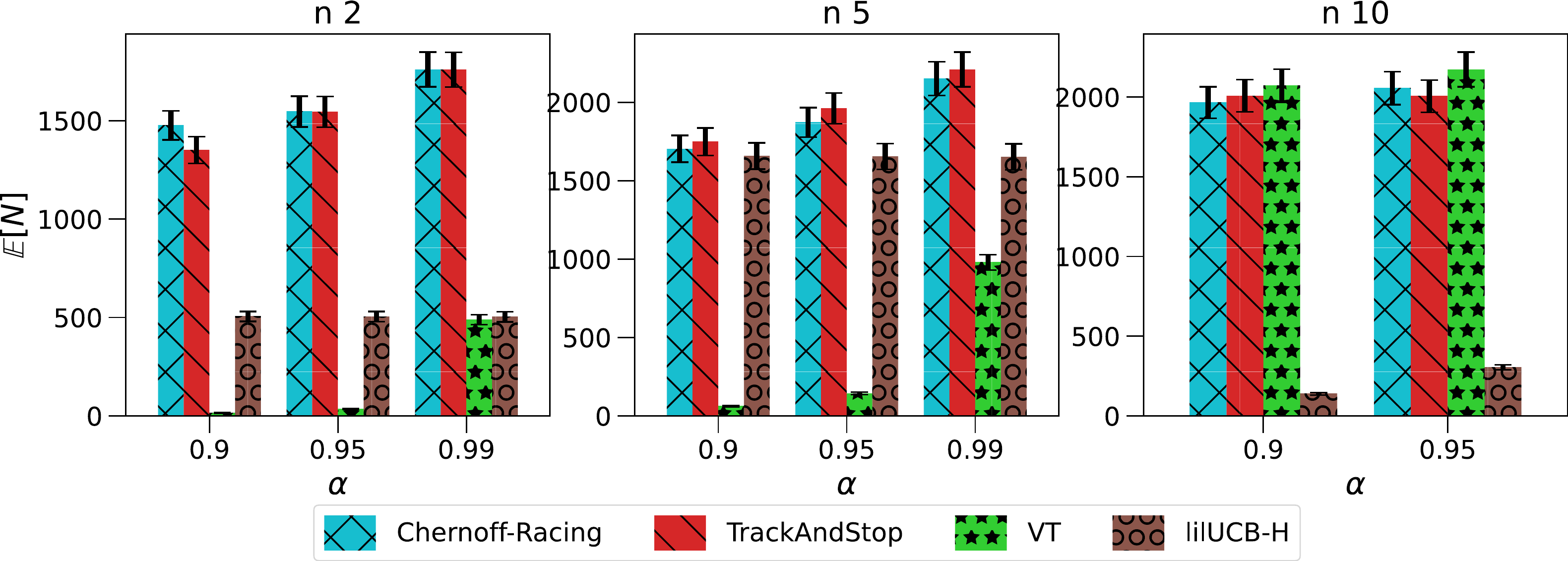}
    \caption{$\E{N}$ for Normal rewards.}\label{Fig:EnNormal}
\end{figure}

\begin{table}[ht]
	\begin{center}
		\begin{tabular}{ccccccccc}
			\hline
			$n$ & 2 &&& 5 &&& 10& \\\hline
			$\alpha$ & 0.9 & 0.95 &0.99& 0.9 & 0.95 &0.99& 0.9 & 0.95\\
			$k$ &5 & 11&40 &7.8 & 16.3 &85 &10.3&20.6 \\   
			\hline 
		\end{tabular}
	\end{center}
	\caption{$k$ for VT with Normal rewards.}
	\label{tab:VTk}
\end{table}

\section{Conclusion}
We study the problem of best arm identification in multi-armed bandit for fixed confidence setting under the Bayesian setting with both Bernoulli and Normal rewards. We use the classical \textit{vector at a time} and \textit{play the winner} algorithms and analyze them in a novel way using the Gambler's ruin problem and martingales stopping theorem. We also derive easy estimations for these algorithms. Numerical experiments show that these rules compare favorably with recently proposed algorithms in terms of having higher accuracy and smaller sample complexity.

\bibliographystyle{IEEEtran}

\clearpage
\bibliography{refs}

% Generated by IEEEtran.bst, version: 1.14 (2015/08/26)
\begin{thebibliography}{10}
\providecommand{\url}[1]{#1}
\csname url@samestyle\endcsname
\providecommand{\newblock}{\relax}
\providecommand{\bibinfo}[2]{#2}
\providecommand{\BIBentrySTDinterwordspacing}{\spaceskip=0pt\relax}
\providecommand{\BIBentryALTinterwordstretchfactor}{4}
\providecommand{\BIBentryALTinterwordspacing}{\spaceskip=\fontdimen2\font plus
\BIBentryALTinterwordstretchfactor\fontdimen3\font minus
  \fontdimen4\font\relax}
\providecommand{\BIBforeignlanguage}[2]{{%
\expandafter\ifx\csname l@#1\endcsname\relax
\typeout{** WARNING: IEEEtran.bst: No hyphenation pattern has been}%
\typeout{** loaded for the language `#1'. Using the pattern for}%
\typeout{** the default language instead.}%
\else
\language=\csname l@#1\endcsname
\fi
#2}}
\providecommand{\BIBdecl}{\relax}
\BIBdecl

\bibitem{Li_2010LinUCB}
\BIBentryALTinterwordspacing
L.~Li, W.~Chu, J.~Langford, and R.~E. Schapire, ``A contextual-bandit approach
  to personalized news article recommendation,'' \emph{Proceedings of the 19th
  international conference on World wide web - WWW ’10}, 2010. [Online].
  Available: \url{http://dx.doi.org/10.1145/1772690.1772758}
\BIBentrySTDinterwordspacing

\bibitem{DisplayAdvertising2017}
\BIBentryALTinterwordspacing
E.~M. Schwartz, E.~T. Bradlow, and P.~S. Fader, ``Customer acquisition via
  display advertising using multi-armed bandit experiments,'' \emph{Marketing
  Science}, vol.~36, no.~4, pp. 500--522, 2017. [Online]. Available:
  \url{https://doi.org/10.1287/mksc.2016.1023}
\BIBentrySTDinterwordspacing

\bibitem{bks}
R.~E. Bechhofer, J.~Kiefer, and M.~Sobel, ``Sequential identification and
  ranking procedures,'' \emph{The University of Chicago Press, Chicago-London},
  1968.

\bibitem{bk}
R.~E. Bechhofer and R.~V. Kulkarni, ``Closed adaptive sequential procedures for
  selecting the best of $k > 2$ bernoulli populations,'' in \emph{Statistical
  Decision Theory and Related Topics, Vol 1, Academic Press}, S.~Gupta and
  J.~O. Berger, Eds.\hskip 1em plus 0.5em minus 0.4em\relax New York: Cornell
  univ Ithaca NY School of Operations research and industrial engineering,
  1982, pp. 61--108.

\bibitem{h}
\BIBentryALTinterwordspacing
M.~Hartmann, ``An improvement on paulson s sequential ranking procedure,''
  \emph{Sequential Analysis}, vol.~7, no.~4, pp. 363--372, 1988. [Online].
  Available: \url{https://doi.org/10.1080/07474948808836163}
\BIBentrySTDinterwordspacing

\bibitem{hm}
D.~G. Hoel and M.~Mazumdar, ``An extension of paulson's selection procedure,''
  \emph{Ann Math. Statist.}, vol.~39, p. 1968, 1968.

\bibitem{p}
E.~Paulson, ``A sequential procedure for selecting the population with the
  largest mean from k normal populations,'' \emph{Ann. Math. Statist}, vol.~35,
  pp. 174--180, 1964.

\bibitem{sw1}
M.~Sobel and G.~Weiss, ``Play-the-winner rule and inverse sampling in selecting
  the better of two binomial populations,'' \emph{Journal of the American
  Statistical Association}, vol.~66, no. 335, pp. 545--551, 1971.

\bibitem{sw2}
------, ``Recent results on using the play the winner sampling rule with
  binomial selection problems,'' in \emph{Proceedings of the Sixth Berkeley
  Symposium on Mathematical Statistics and Probability, Vol 1; Theory of
  Statistics}, University of California Press, 1972, pp. 717--736.

\bibitem{abm}
J.~Y. Audibert, S.~Bubeck, and R.~Munos, ``Best arm identification in
  multi-armed bandits,'' in \emph{COLT 2010}.\hskip 1em plus 0.5em minus
  0.4em\relax Haifa, Israel: The 23rd Conference on Learning Theory, 2010, p.
  13 p.

\bibitem{jmnb}
K.~Jamieson, M.~Malloy, R.~Nowak, and S.~Bubeck, ``lil' ucb : An optimal
  exploration algorithm for multi-armed bandits,'' 2013.

\bibitem{ggl}
\BIBentryALTinterwordspacing
V.~Gabillon, M.~Ghavamzadeh, and A.~Lazaric, ``Best arm identification: A
  unified approach to fixed budget and fixed confidence,'' in \emph{Advances in
  Neural Information Processing Systems}, F.~Pereira, C.~J.~C. Burges,
  L.~Bottou, and K.~Q. Weinberger, Eds., vol.~25.\hskip 1em plus 0.5em minus
  0.4em\relax Curran Associates, Inc., 2012, pp. 3212--3220. [Online].
  Available:
  \url{https://proceedings.neurips.cc/paper/2012/file/8b0d268963dd0cfb808aac48a549829f-Paper.pdf}
\BIBentrySTDinterwordspacing

\bibitem{emm}
E.~Even-Dar, S.~Mannor, and Y.~Mansour, ``Action elimination and stopping
  conditions for the multi-armed bandit and reinforcement learning problems,''
  \emph{Journal of Machine Learning Research}, vol.~7, pp. 1079--1105, 2006.

\bibitem{gk}
A.~Garivier and E.~Kaufmann, ``Optimal best arm identification with fixed
  confidence,'' \emph{JMLR Workshop and Conference Proceedings}, vol.~49, pp.
  1--30, 2016.

\bibitem{ru}
D.~Russo, ``Simple bayesian algorithms for best arm identification,'' in
  \emph{CoRR}, abs/1602.08448, 2016.

\bibitem{degenne2020gamification}
R.~Degenne, P.~Ménard, X.~Shang, and M.~Valko, ``Gamification of pure
  exploration for linear bandits,'' 2020.

\bibitem{wang2021core}
N.~Wang, B.~Kveton, and M.~Karimzadehgan, ``Core: Capitalizing on rewards in
  bandit exploration,'' 2021.

\bibitem{bastani2020explorationfree}
H.~Bastani, M.~Bayati, and K.~Khosravi, ``Mostly exploration-free algorithms
  for contextual bandits,'' 2020.

\bibitem{ro}
S.~M. Ross, \emph{Stochastic Processes}, 2nd~ed.\hskip 1em plus 0.5em minus
  0.4em\relax Wiley, 1996.

\bibitem{Jamieson-2014}
K.~{Jamieson} and R.~{Nowak}, ``Best-arm identification algorithms for
  multi-armed bandits in the fixed confidence setting,'' in \emph{2014 48th
  Annual Conference on Information Sciences and Systems (CISS)}, 2014, pp.
  1--6.

\bibitem{kaufmann2013information}
E.~Kaufmann and S.~Kalyanakrishnan, ``Information complexity in bandit subset
  selection,'' in \emph{Conference on Learning Theory}, 2013, pp. 228--251.

\bibitem{russo2018simple}
D.~Russo, ``Simple bayesian algorithms for best arm identification,'' 2018.

\bibitem{bp}
R.~E. Barlow and F.~Proschan, \emph{Statistical Theory of Reliabilitiy and Life
  Testing: Probability Models}.\hskip 1em plus 0.5em minus 0.4em\relax Holt,
  Rinehart and Winston, 1975.

\end{thebibliography}

\newpage
\appendix
\addcontentsline{toc}{section}{Appendices}
\renewcommand{\thesubsection}{\Alph{subsection}}
\subsection{Proofs}
	\begin{proof}[Proof of \cref{lemma:Eprelim}]
	\begin{eqnarray*}
		E[e^{- 2 \mu W}|W > 0]=&  \frac{\int_0 ^{\infty} e^{- 2 \mu x} e^{- (x - \mu)^2/2} dx}{ \sqrt{ 2 \pi} P(W > 0)}= \\
		\frac{\int_0 ^{\infty} e^{-(x+\mu)^2/2}dx}{ \sqrt{ 2 \pi} \Phi(\mu)} =& \frac{1-\Phi(\mu)}{\Phi(\mu)}
	\end{eqnarray*}
	Let  $Z = W- \mu$.  
	\begin{eqnarray*}
		E[W|W > 0]=&  \mu + E[Z| Z > - \mu]=\\
		\mu + \frac{1}{ \sqrt{ 2 \pi} \Phi(\mu)}  \int_{ - \mu}^{\infty} x e^{- x^2/2} \, dx
		=&  \mu + \frac{e^{- \mu^2/2}}{ \sqrt{ 2 \pi} \Phi(\mu)    } 
	\end{eqnarray*}
	Because $E[e^{-2 \mu W}] = 1,$  the third equality follows from the first upon using the identity
	$$1 = E[e^{-2 \mu W}|W > 0] \Phi(\mu) + E[e^{-2 \mu W}|W < 0] (1 -\Phi(\mu) ) $$
	Similarly, the fourth equality follows from the second since  $\; \mu = E[W|W > 0]  \Phi(\mu) + E[W|W < 0] (1 -\Phi(\mu) ).$
	\end{proof}

	\begin{proof}[Proof \cref{lemma:ESn}]
	The right hand inequality of (a) is immediate since $\mu > 0.$  
	To prove the left side of (a), note that conditional on $S_N > b$ and on the value $S_{N-1},$ that  $S_N$  is distributed as $b$ plus the amount by which a normal with mean $\mu$ and variance $1$ exceeds the positive amount $b - S_{N-1}$ given that it does exceed that amount. But a normal conditioned to be positive is known to have strict increasing failure rate (see \cite{bp}) implying that 
	$S_N | \{S_N > b, S_{N-1} \}$ is stochastically smaller than $ b + W_i| \{W_i > 0\}$. As this is true no matter what the value of $S_{N-1},$ it follows that $S_N | \{S_N > b \}$ is stochastically smaller than $ b + W_i| \{W_i > 0\}$, implying that
	$E[e^{- 2 \mu S_N}| S_N > b]  > e^{-2 \mu b} E[e^{- 2 \mu W_i}|W_i > 0] .$ The result now  follows from Lemma \ref{lemma:Eprelim}.\\
	The left hand inequality of (b) is immediate.  
	To prove  the right hand inequality, note that the same argument as used in part (a) shows that  $S_N | \{S_N < - b\} >_{st} -b + W_i | \{W_i < 0\} ,$ implying that 
	$E[e^{- 2 \mu S_N}| S_N < -b ]  < e^{2 \mu b} E[e^{- 2 \mu W_i}|W_i < 0] .$ Thus, the result follows from Lemma \ref{lemma:Eprelim}.
	\end{proof}

\subsection{A remark on Variance Reduction}
	In our experiments for the VT rule, we observe that the estimator of $\E{N}$ has a large variance. 
	In the case where $F$ is the uniform $(0,1)$ distribution, we can reduce the variance of $\E{N}$ estimator by using $Y = \frac{1}{P_1(1-P_2)}$ as a control variable, where $P_1$ and $P_2$ are the random variables representing the means of the best and second best arm. That is, if let $T$ denote the raw estimator, then the new estimator is $ T + c(Y  - \E{Y})$
	where the variance is minimized when
	$c = -\mbox{Cov}(T,  Y)/\mbox{Var}(Y)$. To obtain the mean value of the control variable, we condition on $P_2$,
	\begin{align*}
		\E{\frac{1}{P_1(1 - P_2)}}= &\E{\E{\frac{1}{P_1(1 - P_2)} | P_2}}\\
		=\E{ \frac{1}{1-P_2} \E{\frac{1}{P_1}|P_2}}= & \E{\frac{-\log(P_2)}{(1 - P_2)^2}}=\\
		n(n-1)  \int_0^1 \frac{-x^{n-2}\log(x)}{(1-x)} dx
		\approx & n(n-1) \frac{1}{r} \sum_{i = 1} ^ r h(\frac{i - 0.5}{r}) 
	\end{align*}
	where $r$ is a large integer, and $h(x) =  \frac{-x^{n-2}\log(x)}{(1-x)}$. The third equality holds because $P_1|P_2 \sim \mbox{unif}(P_2, 1)$. The values of $\mbox{Cov}(T, Y)$ and $\mbox{Var}(Y)$ can be estimated from the simulation, and these can then be used to determine $c$. 
	In our numerical examples, we observe that the variance is reduced by up to $60$ percent using this technique.
	
\subsection{PW with Early Elimination}\label{sec:PW-EE}
	Suppose we use PW and add an early elimination on any population whose first $j$ observations are all failures. Let $B_e$ be the event that the best population is eliminated early.  Because the mean of the best population has density function $f(p) = n p^{n-1}, 0 < p < 1,$ it follows that 
	$$P(B_e) = \int_0 ^1 (1-p)^j n p^{n-1} dp =  \frac{ n! j!}{(n+j)!} $$
	Let $N_{nb}$ be the  number of nonbest populations that are eliminated early. To compute  $E[N_{nb}],$  note that the probability a randomly chosen population is eliminated early is $1/(j+1),$ giving that 
	$$\frac{n}{j+1} = E[\mbox{number eliminated early}] = E[N_{nb}] +  \frac{ n! j!}{(n+j)!} $$
	Hence, 
	$E[N_{nb}] = \frac{n}{j+1} - \frac{ n! j!}{(n+j)!}$. For instance, if $n=10, j=5$ then  $P(B_e) = 0.000333$ and  $E[N_{nb}]= 1.666$ .
	Remarkably, early elimination sometimes has almost no effect on either $P(C)$ or the average number of needed trials.
	\begin{ex}
	   Suppose $n=5$ and $k=48.$ Then, a  simulation with $20,000,000 $ runs yielded Table \ref{tab:PW-PW-EE}.
	\begin{table}[ht]
	\small
		\begin{center}
			\begin{tabular}{c c c c }
				\hline
				Rule & $P(C)$ & $E[N]$ & sd \\
				\hline
				PW    & 0.9543137 &  375.3552  &  0.1410 \\
				PW-EE &  0.9544769 &  375.4235 &  0.1411 \\
				\hline 
			\end{tabular}
		\end{center}
		\caption{Results of PW and PW with early elimination, PW-EE}
		\label{tab:PW-PW-EE}
	\end{table}
	Thus 
	it seems impossible to tell in this example whether early elimination  increases either accuracy or efficiency. (In particular, since the mean number of non-best populations that are eliminated early is $0.7121$ it seems very surprising that early elimination does not decrease the average number of trials needed.)
	\end{ex}

\end{document}

% --- supplement: supp.tex ---

\title{Guaranteed Fixed-Confidence Best Arm Identification in Multi-Armed Bandits: Simple Sequential Elimination Algorithms}

\author{}

\maketitle

\section{Proofs}
\begin{lem}\label{lem:Eprelim}
If  $W$ is a Normal random variable with mean $\mu$ and variance $1$, then
\begin{eqnarray*}
	E[e^{- 2 \mu W}|W > 0]  &=&  R(-\mu)   \\
	E[W|W > 0]  &=&  \mu + { e^{ -\mu^2/2} }/( {\sqrt{ 2 \pi } \Phi(\mu)})  \\
	E[e^{- 2 \mu W}|W < 0]   &=& R(\mu) \\
	E[W|W < 0]  &=&  \mu - {  e^{ -\mu^2/2} }/( {\sqrt{ 2 \pi } (1 - \Phi(\mu) }) 
\end{eqnarray*}
\end{lem}
	\begin{proof}
	\begin{eqnarray*}
		E[e^{- 2 \mu W}|W > 0]=&  \frac{\int_0 ^{\infty} e^{- 2 \mu x} e^{- (x - \mu)^2/2} dx}{ \sqrt{ 2 \pi} P(W > 0)}= \\
		\frac{\int_0 ^{\infty} e^{-(x+\mu)^2/2}dx}{ \sqrt{ 2 \pi} \Phi(\mu)} =& \frac{1-\Phi(\mu)}{\Phi(\mu)}
	\end{eqnarray*}
	Let  $Z = W- \mu$.  
	\begin{eqnarray*}
		E[W|W > 0]=&  \mu + E[Z| Z > - \mu]=\\
		\mu + \frac{1}{ \sqrt{ 2 \pi} \Phi(\mu)}  \int_{ - \mu}^{\infty} x e^{- x^2/2} \, dx
		=&  \mu + \frac{e^{- \mu^2/2}}{ \sqrt{ 2 \pi} \Phi(\mu)    } 
	\end{eqnarray*}
	Because $E[e^{-2 \mu W}] = 1,$  the third equality follows from the first upon using the identity
	$$1 = E[e^{-2 \mu W}|W > 0] \Phi(\mu) + E[e^{-2 \mu W}|W < 0] (1 -\Phi(\mu) ) $$
	Similarly, the fourth equality follows from the second since  $\; \mu = E[W|W > 0]  \Phi(\mu) + E[W|W < 0] (1 -\Phi(\mu) ).$
	\end{proof}
	\begin{lem}\label{lem:ESn}
	Let $S_n = \sum_{i=1}^n W_i,\; n \geq 1,$ where $W_i, i \geq 1 $ are independent Normal random variables with mean $\mu > 0$ and variance $1.$ For  given $b > 0,$ let $N = \min\{n: \;\mbox{either} \; S_n < - b \;\mbox{or}\; S_n > b\}.$ \\
	(a)          \[   \; R(-\mu)   e^{- 2 \mu b}        <  E[e^{- 2 \mu S_N}| S_N > b]   <   e^{- 2 \mu b}         \]     \\
	(b)      \[               e^{2 \mu b}              <  E[e^{- 2 \mu S_N}| S_N < - b]   <       
	e^{2 \mu b}   R(\mu)              \]          
	\end{lem}
	\begin{proof}
	The right hand inequality of (a) is immediate since $\mu > 0.$  
	To prove the left side of (a), note that conditional on $S_N > b$ and on the value $S_{N-1},$ that  $S_N$  is distributed as $b$ plus the amount by which a Normal with mean $\mu$ and variance $1$ exceeds the positive amount $b - S_{N-1}$ given that it does exceed that amount. But a Normal conditioned to be positive is known to have strict increasing failure rate (see \cite{bp}) implying that 
	$S_N | \{S_N > b, S_{N-1} \}$ is stochastically smaller than $ b + W_i| \{W_i > 0\}$. As this is true no matter what the value of $S_{N-1},$ it follows that $S_N | \{S_N > b \}$ is stochastically smaller than $ b + W_i| \{W_i > 0\}$, implying that
	$E[e^{- 2 \mu S_N}| S_N > b]  > e^{-2 \mu b} E[e^{- 2 \mu W_i}|W_i > 0] .$ The result now  follows from Lemma \ref{lem:Eprelim}.\\
	The left hand inequality of (b) is immediate.  
	To prove  the right hand inequality, note that the same argument as used in part (a) shows that  $S_N | \{S_N < - b\} >_{st} -b + W_i | \{W_i < 0\} ,$ implying that 
	$E[e^{- 2 \mu S_N}| S_N < -b ]  < e^{2 \mu b} E[e^{- 2 \mu W_i}|W_i < 0] .$ Thus, the result follows from Lemma \ref{lem:Eprelim}.
	\end{proof}

\section{A remark on Variance Reduction}
	In our experiments for the VT rule, we observe that the estimator of $\E{N}$ has a large variance. 
	In the case where $F$ is the uniform $(0,1)$ distribution, we can reduce the variance of $\E{N}$ estimator by using $Y = \frac{1}{P_1(1-P_2)}$ as a control variable, where $P_1$ and $P_2$ are the random variables representing the means of the best and second best arm. That is, if let $T$ denote the raw estimator, then the new estimator is $ T + c(Y  - \E{Y})$
	where the variance is minimized when
	$c = -\mbox{Cov}(T,  Y)/\mbox{Var}(Y)$. To obtain the mean value of the control variable, we condition on $P_2$,
	\begin{align*}
		\E{\frac{1}{P_1(1 - P_2)}}= &\E{\E{\frac{1}{P_1(1 - P_2)} | P_2}}\\
		=\E{ \frac{1}{1-P_2} \E{\frac{1}{P_1}|P_2}}= & \E{\frac{-\log(P_2)}{(1 - P_2)^2}}=\\
		n(n-1)  \int_0^1 \frac{-x^{n-2}\log(x)}{(1-x)} dx
		\approx & n(n-1) \frac{1}{r} \sum_{i = 1} ^ r h(\frac{i - 0.5}{r}) 
	\end{align*}
	where $r$ is a large integer, and $h(x) =  \frac{-x^{n-2}\log(x)}{(1-x)}$. The third equality holds because $P_1|P_2 \sim \mbox{unif}(P_2, 1)$. The values of $\mbox{Cov}(T, Y)$ and $\mbox{Var}(Y)$ can be estimated from the simulation, and these can then be used to determine $c$. 
	In our numerical examples, we observe that the variance is reduced by up to $60$ percent using this technique.
	
\section{PW with Early Elimination}\label{sec:PW-EE}
	Suppose we use PW and add an early elimination on any population whose first $j$ observations are all failures. Let $B_e$ be the event that the best population is eliminated early.  Because the mean of the best population has density function $f(p) = n p^{n-1}, 0 < p < 1,$ it follows that 
	$$P(B_e) = \int_0 ^1 (1-p)^j n p^{n-1} dp =  \frac{ n! j!}{(n+j)!} $$
	Let $N_{nb}$ be the  number of nonbest populations that are eliminated early. To compute  $E[N_{nb}],$  note that the probability a randomly chosen population is eliminated early is $1/(j+1),$ giving that 
	$$\frac{n}{j+1} = E[\mbox{number eliminated early}] = E[N_{nb}] +  \frac{ n! j!}{(n+j)!} $$
	Hence, 
	$E[N_{nb}] = \frac{n}{j+1} - \frac{ n! j!}{(n+j)!}$. For instance, if $n=10, j=5$ then  $P(B_e) = 0.000333$ and  $E[N_{nb}]= 1.666$ .
	Remarkably, early elimination sometimes has almost no effect on either $P(C)$ or the average number of needed trials.
	\begin{ex}
	   Suppose $n=5$ and $k=48.$ Then, a  simulation with $20,000,000 $ runs yielded Table \ref{tab:PW-PW-EE}.
	\begin{table}[ht]
	\small
		\begin{center}
			\begin{tabular}{c c c c }
				\hline
				Rule & $P(C)$ & $E[N]$ & sd \\
				\hline
				PW    & 0.9543137 &  375.3552  &  0.1410 \\
				PW-EE &  0.9544769 &  375.4235 &  0.1411 \\
				\hline 
			\end{tabular}
		\end{center}
		\caption{Results of PW and PW with early elimination, PW-EE}
		\label{tab:PW-PW-EE}
	\end{table}
	Thus 
	it seems impossible to tell in this example whether early elimination  increases either accuracy or efficiency. (In particular, since the mean number of non-best populations that are eliminated early is $0.7121$ it seems very surprising that early elimination does not decrease the average number of trials needed.)
	\end{ex}

\newpage
\bibliography{refs}
\bibliographystyle{IEEEtran}